\documentclass[accepted]{tpm2026} %

\usepackage[american]{babel}
\usepackage{natbib} 
\usepackage{mathtools} 
\usepackage{booktabs}
\usepackage{multirow}
\usepackage{stfloats}
\usepackage{graphicx}
\usepackage{amssymb,amsmath,amsfonts,amsthm}
\usepackage{mathrsfs}
\usepackage{algorithm, algpseudocode}
\usepackage[font={small,it},labelfont=bf]{caption}

\newtheorem{theorem}{Theorem}
\newtheorem{corollary}{Corollary}
\newtheorem{definition}{Definition}

\newcommand{\Xb}{\mathbf{X}}
\newcommand{\Ab}{\mathbf{A}}

\newcommand{\Zb}{\mathbf{Z}}
\newcommand{\bb}{\mathbf{b}}
\newcommand{\alphab}{\mathbf{\alpha}}

\title{Linear Independent Component Analysis via Optimal Transport}

\author[3]{\href{mailto:ashutoshjha3103@gmail.com?Subject=TPM 2026 Paper Inquiry}{Ashutosh~Jha}}
\author[1,2]{Michel~Besserve}
\author[1]{Simon~Buchholz}

\affil[1]{%
    Max Planck Institute for Intelligent Systems, T\"ubingen
}

\affil[2]{%
    Institute of Artificial Intelligence\\
    TU Braunschweig
}

\affil[3]{%
    Methods Center\\
    University of T\"ubingen
}

\begin{document}
\maketitle

\begin{abstract}
Linear Independent Component Analysis (ICA) recovers jointly independent source signals from their linear mixtures. 
 To achieve this, classical ICA algorithms attempt to maximize non-Gaussianity, measured by negentropy, which is linked to independence by information theory. Because exact negentropy optimization is intractable, they rely on proxy contrast functions, such as fourth-order cumulants, and parametric log-likelihoods. 
We propose instead to measure non-Gaussianity using the squared Wasserstein distance $W_2^2$ to a standard Gaussian.
We prove that the Wasserstein distance between a standard normal distribution and linear projections of the data  is maximized when the projection recovers an independent component.
Based on this observation, we propose the OT-ICA algorithm which  finds this projection by gradient-based optimization.
Empirical evaluation on simulated data shows that 
OT-ICA outperforms proxy-based methods for different distributions of the latent variables.
Application to EEG artifact removal and econometric price discovery confirm OT-ICA can be used for applied ICA tasks without distributional assumptions.

\end{abstract}

\section{Introduction}
Independent Component Analysis (ICA) \citep{jutten1985, comon1994} recovers mutually independent source signals $\mathbf{Z}$ from 
a linear mixture $\mathbf{X} = \mathbf{A}\mathbf{S}$. Identifiability holds under the following conditions.

\begin{theorem}[Linear ICA \citep{comon1994}]\label{th:linear_ica}
Let $\mathbf{X} \in \mathbb{R}^d$ be observed data and $\mathbf{Z} \in \mathbb{R}^d$ latent sources. 
Assume (i) $\mathbf{X} = \mathbf{A}\mathbf{Z}$ with $\mathbf{A}$ invertible, (ii) $z_1, \dots, z_d$ are mutually independent, 
and (iii) at most one $z_i$ is Gaussian. Then $\mathbf{A}$ is identifiable up to column permutation and scaling.
\end{theorem}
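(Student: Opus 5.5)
The plan is to reduce the statement to the Darmois--Skitovich theorem. Suppose two models produce the same observation, $\Xb = \Ab\Zb = \Ab'\Zb'$, where both $\Zb$ and $\Zb'$ satisfy (ii) and (iii) and both matrices are invertible. Then $\Zb' = \mathbf{M}\Zb$ with $\mathbf{M} = \Ab'^{-1}\Ab$ invertible. It suffices to show that $\mathbf{M}$ is a scaled permutation matrix, i.e.\ $\mathbf{M} = \mathbf{P}\mathbf{D}$ with $\mathbf{P}$ a permutation and $\mathbf{D}$ diagonal and invertible. Then $\Ab' = \Ab\mathbf{M}^{-1}$, which is exactly the claimed ambiguity in the columns.

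\textbf{Step 1: a dependence lemma.} Take two distinct rows $i \neq k$ of $\mathbf{M}$. The outputs $z'_i = \sum_j m_{ij} z_j$ and $z'_k = \sum_j m_{kj} z_j$ are independent. The Darmois--Skitovich theorem then says: if $m_{ij} m_{kj} \neq 0$ for some $j$, the source $z_j$ must be Gaussian. So whenever two rows of $\mathbf{M}$ share a column support index $j$, the corresponding $z_j$ is Gaussian.

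\textbf{Step 2: combinatorial conclusion.} By (iii), at most one source, say $z_{j_0}$, is Gaussian. Every other column $j \neq j_0$ therefore has at most one nonzero entry. Invertibility forces each column to have at least one nonzero entry, so these $d-1$ columns have exactly one each. Their nonzero entries must sit in distinct rows; otherwise $\mathbf{M}$ would have two proportional columns and be singular.

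It remains to handle column $j_0$. Suppose it had two nonzero entries, in rows $r_1$ and $r_2$. Each of the other columns has one nonzero entry, and these occupy $d-1$ distinct rows. Consequently at least one of $r_1, r_2$, say $r_1$, also contains a non-Gaussian column's entry $m_{r_1 j}$. Then $z'_{r_1}$ is a sum of a non-Gaussian independent term and a Gaussian term, so $z'_{r_1}$ is non-Gaussian by Cramér's decomposition theorem. Meanwhile $z'_{r_2}$ contains $z_{j_0}$ with a nonzero coefficient.

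To close this case, I will invoke the contrapositive form of Step 1 on the pair of rows $r_1, r_2$. Since $z'_{r_1}$ and $z'_{r_2}$ are independent, only Gaussian sources may be shared between them. Sharing $z_{j_0}$ is allowed by this, so a counting argument is needed instead. There are $d$ rows and $d$ columns, and each of the $d-1$ non-Gaussian columns occupies one distinct row. Hence exactly one row $r^\ast$ has no non-Gaussian entry, and that row must equal $m_{r^\ast j_0} z_{j_0}$ with a nonzero coefficient, by invertibility. Any further nonzero entry $m_{r j_0}$ with $r \ne r^\ast$ is then shared between rows $r$ and $r^\ast$. This forces the dependence to be handled via the Gaussian cross-covariance: $\operatorname{Cov}(z'_r, z'_{r^\ast}) = m_{rj_0} m_{r^\ast j_0}\operatorname{Var}(z_{j_0}) \neq 0$, contradicting independence. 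Hence $\mathbf{M}$ has exactly one nonzero entry per row and per column, which proves the claim.

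\textbf{Main obstacle.} The real content is the Darmois--Skitovich theorem itself, which I take as given. Its proof goes through characteristic functions: one shows that $\log\phi_{z_j}$ satisfies a polynomial functional equation and then applies Marcinkiewicz's theorem to conclude that the source is Gaussian. Beyond that, the delicate point is the single Gaussian column, which needs the covariance argument above (assuming finite variance, or otherwise a direct appeal to Darmois--Skitovich on the decomposition). For this reason I treat the non-Gaussian and Gaussian columns separately, as in Steps 1 and 2.
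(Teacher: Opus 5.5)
The paper does not prove this statement. Theorem~\ref{th:linear_ica} is imported from \citet{comon1994} as background, so there is no in-paper argument to compare against. Your proof is correct and follows the standard Darmois--Skitovich route.

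It differs from the usual Comon-style presentation in its setting. There one whitens first, so the matrix $\mathbf{M}$ relating the two source vectors is orthogonal. Once the $d-1$ non-Gaussian columns are shown to be multiples of distinct coordinate vectors, the Gaussian column must lie on the remaining coordinate, simply because it is orthogonal to them. You work with a general invertible $\mathbf{M}$. This matches the statement as written and avoids the second-moment assumption that whitening needs. The price is an extra step ruling out further nonzero entries in the Gaussian column, and your row-$r^\ast$ counting plus the cross-covariance does exactly that. Like Comon's version, your argument uses only pairwise independence of the components of $\mathbf{M}\Zb$. It also uses only the law of $\mathbf{M}\Zb$, so it applies verbatim when $\Ab\Zb$ and $\Ab'\Zb'$ are merely equal in distribution.

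Two small points.

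First, the paragraph about rows $r_1, r_2$ and Cram\'er's theorem leads nowhere and can be deleted; the counting argument stands on its own.

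Second, the covariance step silently uses $\operatorname{Var}(z_{j_0})>0$ and integrability of the non-Gaussian source $z_j$ that shares row $r$.
\begin{itemize}
\item Nondegeneracy is an implicit hypothesis of the theorem anyway, since a degenerate source would make its column of $\Ab$ non-identifiable.
\item Integrability can be avoided. If $z'_r = a z_{j_0} + c z_j$ and $z'_{r^\ast} = b z_{j_0}$ are independent, comparing characteristic functions gives $\varphi_{z_{j_0}}(as+bt) = \varphi_{z_{j_0}}(as)\,\varphi_{z_{j_0}}(bt)$ for $s$ near $0$ (where $\varphi_{z_j}(cs)\neq 0$). For a Gaussian this forces $ab\operatorname{Var}(z_{j_0})=0$, with no moment condition on $z_j$.
\end{itemize}
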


Algorithms for learning $\mathbf{A}$ from data generally rely on the property that linear mixtures of independent non-Gaussian sources are more Gaussian than their constituents \citep{hyvarinen2001}, indeed, in the limit of many sources, convergence to a Gaussian follows from the central limit theorem.
A rigorous statement of this property was given in \citet{cardoso2022}.

\begin{theorem}[Independence and Non-Gaussianity \citep{cardoso2022}]\label{th:pythagorean_ig_ica}
Let $Y \in \mathbb{R}^d$ be whitened ($\operatorname{Cov}(Y)=\mathbf{I}$). The mutual information $\mathscr{I}(Y)$, which measures statistical dependence among components $Y_1,\ldots,Y_d$, satisfies:
\begin{equation}
    \mathscr{I}(Y) = G(Y) - \sum_{i} G(Y_i)
\end{equation}
where $G(\cdot)$ denotes non-Gaussianity defined as the minimal KL divergence to a Gaussian distribution. 
\end{theorem}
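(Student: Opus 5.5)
The plan is to express every quantity through differential entropies and use the whitening assumption to cancel the Gaussian reference terms. The identity then reduces to a standard entropy computation.

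\textbf{Step 1: non-Gaussianity as an entropy gap.} For a random vector $U\in\mathbb{R}^k$ with density and finite covariance $\Sigma_U$, I will show that the minimizing Gaussian in $G(U)=\min_{\mu,\Sigma}\mathrm{KL}(p_U\,\|\,\mathcal N(\mu,\Sigma))$ is the moment-matched one, $\mathcal N(\mathbb{E}U,\Sigma_U)$. This is the usual I-projection argument. For any Gaussian $q$, $\log q$ is a quadratic polynomial in $u$. Hence $\mathbb{E}_{p_U}[\log q]=\mathbb{E}_{g_U}[\log q]$, where $g_U$ is the moment-matched Gaussian. This gives the Pythagorean decomposition
\[
\mathrm{KL}(p_U\,\|\,q)=\mathrm{KL}(p_U\,\|\,g_U)+\mathrm{KL}(g_U\,\|\,q).
\]
The second term is nonnegative and vanishes at $q=g_U$, so the minimizer is $g_U$. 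The same cross-entropy identity gives
\[
G(U)=h(g_U)-h(U)=\tfrac12\log\bigl((2\pi e)^k\det\Sigma_U\bigr)-h(U),
\]
where $h$ denotes differential entropy.

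\textbf{Step 2: apply this to $Y$ and its marginals.} Because $Y$ is whitened, $\Sigma_Y=\mathbf I$. This yields $G(Y)=\tfrac d2\log(2\pi e)-h(Y)$. Each marginal has unit variance, so $G(Y_i)=\tfrac12\log(2\pi e)-h(Y_i)$. Summing over $i$ gives
\[
\sum_i G(Y_i)=\tfrac d2\log(2\pi e)-\sum_i h(Y_i).
\]

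\textbf{Step 3: subtract.} We get $G(Y)-\sum_i G(Y_i)=\sum_i h(Y_i)-h(Y)$. This is exactly the standard expression for the mutual information (total correlation) $\mathscr I(Y)=\mathrm{KL}\bigl(p_Y\,\|\,\prod_i p_{Y_i}\bigr)$. To verify it, expand the log-ratio, $\mathbb{E}\log p_Y-\sum_i\mathbb{E}\log p_{Y_i}$, and note that the $i$-th term of the sum only involves the marginal of $Y_i$.

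\textbf{Main obstacle.} The only delicate step is the justification in Step 1. We need finite entropies and second moments so that all terms are finite and the subtractions are legitimate, and we need $\Sigma_U$ nonsingular so that $g_U$ has a density. I would state these as standing regularity assumptions. Whitening already guarantees the nonsingularity of $\Sigma_Y$ and of each marginal variance.

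A further point worth remarking on is that whitening matters in Step 2. For general $Y$ there would be an extra Gaussian-correlation term $\tfrac12\log\bigl(\prod_i\Sigma_{ii}/\det\Sigma\bigr)$. Whitening makes this term zero, and that is why the identity takes the clean form stated.
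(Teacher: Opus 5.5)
Your proof is correct, but it is assembled differently from the paper's. The paper never introduces differential entropies. It takes two Pythagorean identities: the product-manifold one, $D_{\mathrm{KL}}(P_Y\Vert\prod_i q_i)=\mathscr{I}(Y)+\sum_i D_{\mathrm{KL}}(P_{Y_i}\Vert q_i)$, and the Gaussian-manifold one, $D_{\mathrm{KL}}(P_Y\Vert\mathcal{N}(\Sigma))=G(Y)+D_{\mathrm{KL}}(\mathcal{N}(\operatorname{Cov}Y)\Vert\mathcal{N}(\Sigma))$. It then evaluates the single divergence $D_{\mathrm{KL}}(P_Y\Vert\mathcal{N}(\mathrm{diag}\,\operatorname{Cov}Y))$ both ways, since that reference is simultaneously a product distribution and a Gaussian. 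This yields $\mathscr{I}(Y)+\sum_i G(Y_i)=G(Y)+C(Y)$, and whitening makes the correlation term $C(Y)$ vanish. Your Step 1 is exactly the Gaussian-manifold identity. In place of the product-manifold identity, however, you use $\mathscr{I}(Y)=\sum_i h(Y_i)-h(Y)$ and let the Gaussian entropy constants cancel. Your closing term $\tfrac12\log(\prod_i\Sigma_{ii}/\det\Sigma)$ is precisely the paper's $C(Y)$, so both routes give the same general identity. Your version is more explicit. Because you also minimize over the mean, it does not need $Y$ to be centered, whereas the paper's zero-mean references $\mathcal{N}(\Sigma)$ tacitly require this for the product path to produce $G(Y_i)$. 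The paper's version keeps every term a divergence in $[0,\infty]$. Hence the additive form $G(Y)=\mathscr{I}(Y)+\sum_i G(Y_i)$ holds for whitened, centered $Y$ without assuming $h(Y)$ or $h(Y_i)$ finite; only the final subtraction needs $\sum_i G(Y_i)<\infty$. Your subtraction of entropies, by contrast, needs the standing finiteness assumptions you listed. The paper's route also exhibits the identity as two Pythagorean decompositions of one divergence, through the product and Gaussian manifolds, which is the geometric picture it uses to motivate maximizing total marginal non-Gaussianity.
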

This result implies that
minimizing $\mathscr{I}(Y)$  is equivalent to maximizing total marginal non-Gaussianity $\sum_i G(Y_i)$.
However, evaluating $G(Y_i)$ requires the marginal density. Classical algorithms therefore replace it with proxy contrasts:
FastICA \citep{hyvarinen2001, amari1996new} uses logcosh negentropy approximations, 
JADE \citep{cardoso1993blind} fourth-order cumulants, 
InfoMax \citep{bell1995information} a parametric log-likelihood, and 
Picard \citep{ablin2018faster} an adaptive nonlinearity with a provably converging solver. These proxies exhibit shortcomings such as evaluating to zero on specific non-Gaussian distributions or inducing singular Hessian matrices that stall the fixed-point solver shown in Appendix~\ref{app:proxy_limits}.

In this paper we use Optimal Transport (OT) \citep{monge1781, kantorovich1942}
to define a contrast.
This allows us to apply OT to linear ICA and extends the long list of applications of optimal transport techniques in machine learning such as generative modelling \citep{pmlr-v70-arjovsky17a}, normality testing \citep{delbarrio1999}, and
distribution comparison \citep{peyre2019computational}.
Here we propose to quantify non-Gaussianity as the cost of transporting the (empirical) projection of the observations to a fixed standard Gaussian reference.
We justify this choice in 
Theorem~\ref{th:main} below which implies that (in the population setting) this  contrast is maximized when an independent component is recovered. This then motivates our OT-ICA algorithm, which relies on  gradient ascent on the optimal transport cost.

\section{The OT-ICA Framework}\label{sec:OT-framework}

We consider the standard linear ICA setting $\Xb = \Ab\Zb$ where $\Ab\in \mathbb{R}^{d\times d}$ is an invertible matrix. By preprocessing the observations $\Xb$ through centering and whitening, we can assume without loss of generality that $\mathbb{E}(\mathbf{X})=0$ and $\operatorname{Cov}[{\mathbf{X}}] = \mathbf{I}_{d\times d}$ (see Appendix~\ref{app:info_geometry} for details). Then we can constrain the unmixing matrix $\mathbf{B}$ to the Orthogonal Group $\mathrm{O}(d)$ (the set of $d\times d$ real matrices satisfying $\mathbf{B}\mathbf{B}^\top = \mathbf{I}$), and seek $\mathbf{B}\in \mathrm{O}(d)$ such that $\mathbf{B}\Xb=\mathbf{B}\Ab\Zb$ has independent components, or equivalently, $\mathbf{B}$ inverts $\Ab$ up to a scaled permutation matrix. 
This can be done by iteratively seeking optimal projection directions
$\mathbf{b}\in\mathbb{R}^d$, corresponding to each row of
$\mathbf{B}$, maximizing a non-Gaussianity function $G(\mathbf{b}^\top \Xb)$ 
of the projected data. 

We will use the square of the $L_2$-Wasserstein distance $W_2$ \citep{villani2003}, 
defined as a minimum-cost coupling between probability distributions (see Appendix~\ref{app:info_geometry}):
\begin{equation}
    W_2^2(\mu, \nu) \!\triangleq\!W_2(\mu, \nu)^2\!=\!\!\!  \inf_{\pi \in \Pi(\mu, \nu)} \int\!\! \|x - y\|^2  d\pi(x, y) ,
\end{equation}
where $\Pi(\mu,\nu)$ denotes (for measures $\mu,\nu$ on $\mathbb{R}^d$) the set of all measures $\pi$ on $\mathbb{R}^{d+d}$ with first marginal $\mu$ and second marginal $\nu$.

\textbf{OT-ICA for a single component.}
We consider the contrast given by
$W_2^2\big(\mathbb{P}_{\mathbf{b}^\top \widetilde{\mathbf{X}}}, \Gamma\big)$, 
where $\Gamma \equiv \mathcal{N}(0,1)$ denotes the standard Gaussian.
Since  $W_2$ induces a proper metric on probability distributions \citep{villani2003}, this contrast  is non-zero on all non-Gaussian distributions.

OT-ICA maximizes this contrast over the direction $\mathbf{b}$
\begin{equation}
    \widehat{\mathbf{b}} = \arg\max_{\mathbf{b} \in \mathbb{R}^d, |\mathbf{b}|=1} W_2^2\big(\mathbb{P}_{\mathbf{b}^\top \widetilde{\mathbf{X}}}, \Gamma\big). \label{eq:otica_obj}
\end{equation}

The following theorem establishes that the projection $\widehat{\mathbf{b}}^\top \Xb$ from \eqref{eq:otica_obj} indeed recovers an independent component.

\begin{theorem}[$W_2^2$ ICA Contrast]\label{th:main}
Let $\mathbf{Z} = (Z_1, \dots, Z_d)^\top$ be centered independent sources with unit variance and at most one $Z_i$ Gaussian. Assume they all have a smooth and strictly positive density. Then the following bound holds for  any normalized weight vector $\mathbf{\alpha}$ with at least two nonzero components
\begin{equation}
    W_2^2(\mathbf{\alpha} \cdot \mathbf{Z}, \Gamma) < \sum_{i=1}^d \alpha_i^2 W_2^2(Z_i, \Gamma).
\end{equation}
\end{theorem}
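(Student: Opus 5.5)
Since $\sum_i\alpha_i^2=1$ and every variable involved has unit variance, we can rewrite $W_2^2$ through a correlation. For a centered unit-variance real variable $Y$ with quantile function $F_Y^{-1}$, the one-dimensional monotone coupling gives
\begin{equation*}
W_2^2(Y,\Gamma)=2-2\,\kappa(Y),\qquad \kappa(Y)\triangleq\sup_{\pi\in\Pi(\mathbb{P}_Y,\Gamma)}\mathbb{E}_\pi[YG]=\int_0^1 F_Y^{-1}(u)\,\Phi^{-1}(u)\,du .
\end{equation*}
Hence the claimed inequality is equivalent to the superadditivity statement
\begin{equation*}
\kappa(\alpha\cdot\mathbf{Z})>\sum_i\alpha_i^2\,\kappa(Z_i).
\end{equation*}

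To prove it, I will build an explicit coupling of $\alpha\cdot\mathbf{Z}$ with a standard Gaussian. For each $i$, let $G_i=T_i(Z_i)$, where $T_i=\Phi^{-1}\circ F_{Z_i}$ is the monotone optimal map pushing $Z_i$ to $\Gamma$. The $G_i$ are independent standard Gaussians, so $G\triangleq\sum_i\alpha_iG_i$ is again $\mathcal{N}(0,1)$, because $\sum_i\alpha_i^2=1$. The pair $(\alpha\cdot\mathbf{Z},G)$ is therefore an admissible coupling. By independence and centering, the cross terms vanish, and
\begin{equation*}
\mathbb{E}\big[(\alpha\cdot\mathbf{Z})\,G\big]=\sum_{i,j}\alpha_i\alpha_j\,\mathbb{E}[Z_iG_j]=\sum_i\alpha_i^2\,\mathbb{E}[Z_iG_i]=\sum_i\alpha_i^2\,\kappa(Z_i).
\end{equation*}
This already gives the non-strict inequality $W_2^2(\alpha\cdot\mathbf{Z},\Gamma)\le\sum_i\alpha_i^2W_2^2(Z_i,\Gamma)$.

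The main obstacle is strictness. By Brenier's theorem in one dimension, with strictly positive densities, the optimal coupling is unique and is the monotone map. So equality would force $G=\psi(\alpha\cdot\mathbf{Z})$ for a nondecreasing $\psi$, that is, $\sum_i\alpha_iT_i(Z_i)=\psi\big(\sum_i\alpha_iZ_i\big)$ almost surely. Take two indices $i\ne j$ with $\alpha_i,\alpha_j\neq0$. All functions are smooth, because the densities are smooth and positive, and the $Z_k$ have full support. Differentiating in $z_i$ and $z_j$ gives
\begin{equation*}
T_i'(z_i)=\psi'(s)=T_j'(z_j)\quad\text{for all }z_i,z_j,\qquad s=\alpha\cdot\mathbf{z}.
\end{equation*}
Hence $\psi'$ is constant, and so $T_i$ and $T_j$ are affine. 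With unit variance and zero mean this means $T_i=T_j=\mathrm{id}$, so $Z_i$ and $Z_j$ are both Gaussian. This contradicts the assumption that at most one source is Gaussian.

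The delicate points are the following. First, equality of the expected inner products must actually imply that the constructed coupling is optimal; it does, since its value equals the supremum $\kappa$ and the optimal plan is unique. Second, the almost-sure functional identity must be upgraded to a pointwise one. This follows from full support and continuity, after which the differentiation argument applies. If $\psi$ is not known a priori to be differentiable, I would first establish that it is continuous and monotone. Then I would use the identity along lines in which only $z_i$ varies to show that $\psi$ inherits the smoothness of $T_i$ on the range of $s$, which is all of $\mathbb{R}$.
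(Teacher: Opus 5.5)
Your proposal is correct and follows essentially the same route as the paper: the same coupling built from independent one-dimensional monotone maps (you parametrize by $\mathbf{Z}$ with $T_i$ pushing $Z_i$ to $\Gamma$, while the paper uses auxiliary Gaussian noise $N_i$ with $T_i$ pushing $\Gamma$ to $Z_i$; both give the same joint law), the same cancellation of cross terms for the weak inequality, and strictness via the monotone (comonotone) structure of the optimal coupling followed by differentiation that forces $T_i'$ to be constant. Your version is somewhat tidier. You use only first derivatives of the explicit monotone map $\psi$, whereas the paper compares a diagonal Hessian with a rank-one matrix, which needs second derivatives that its cited $C^1$ regularity does not supply. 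You also conclude explicitly that two sources would have to be Gaussian, which is exactly what the ``at most one Gaussian'' hypothesis rules out.
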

The proof of this result is in Appendix~\ref{app:proofs}.
We find the following immediate corollary \ref{cor:max_contrast}.
\begin{corollary}\label{cor:max_contrast}
    Under the assumptions of Theorem~\ref{th:main}
    we have
    \begin{align}
        \operatorname{argmax}_{|\alphab|=1}W_2^2(\mathbf{\alpha} \cdot \mathbf{Z}, \Gamma)
        \subset \{e_1,\ldots, e_d\},
    \end{align}
    i.e., the contrast is maximal only for  independent components.
\end{corollary}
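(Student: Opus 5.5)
The plan is to compare any unit vector $\alphab$ that is not a signed standard basis vector against the best single source, using Theorem~\ref{th:main} together with the fact that $\sum_i\alpha_i^2=1$. Write $M \triangleq \max_{1\le i\le d} W_2^2(Z_i,\Gamma)$ and let $j$ be an index attaining this maximum. Each $Z_i$ has unit variance, so each $W_2^2(Z_i,\Gamma)$ is finite and $M<\infty$.

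\textbf{Step 1 (vectors with at least two nonzero entries).} Let $|\alphab|=1$ with at least two nonzero components. Theorem~\ref{th:main} applies directly. Since the weights $\alpha_i^2$ are nonnegative and sum to one, we obtain
\begin{equation*}
W_2^2(\alphab\cdot\Zb,\Gamma) < \sum_{i=1}^d \alpha_i^2\, W_2^2(Z_i,\Gamma) \le M = W_2^2(e_j\cdot\Zb,\Gamma).
\end{equation*}
So every such $\alphab$ is strictly beaten by $e_j$ and cannot lie in the argmax.

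\textbf{Step 2 (vectors with one nonzero entry).} The only unit vectors remaining are $\pm e_i$. The reference $\Gamma$ is symmetric, and the map $x\mapsto -x$ is an isometry that pushes $\Gamma$ to itself. Hence
\begin{equation*}
W_2^2(-Z_i,\Gamma)=W_2^2(Z_i,\Gamma).
\end{equation*}
Every maximizer is therefore of the form $\pm e_i$ for some $i$ with $W_2^2(Z_i,\Gamma)=M$. This is the claimed inclusion, read up to sign, which is the usual sign/scale ambiguity of Theorem~\ref{th:linear_ica}.

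\textbf{Step 3 (the argmax is nonempty).} If one also wants the argmax to be nonempty, note that $\alphab\mapsto \mathbb{P}_{\alphab\cdot\Zb}$ is continuous in $W_2$. Indeed, with $\mathbb{E}|\Zb|^2<\infty$ we have $W_2(\alphab\cdot\Zb,\alphab'\cdot\Zb)\le |\alphab-\alphab'|\,(\mathbb{E}|\Zb|^2)^{1/2}$. Therefore the contrast is continuous on the compact unit sphere and attains its maximum.

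The only point needing care is the sign convention in Step 2; everything else is immediate from Theorem~\ref{th:main}.
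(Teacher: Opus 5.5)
Your argument is correct and is exactly the immediate deduction the paper intends (it gives no separate proof of this corollary): the strict inequality of Theorem~\ref{th:main}, combined with $\sum_i \alpha_i^2 W_2^2(Z_i,\Gamma)\le \max_i W_2^2(Z_i,\Gamma)$, rules out every unit $\alpha$ with two or more nonzero entries. Your sign remark is also right and could be stated more bluntly: since $W_2^2(-Z_i,\Gamma)=W_2^2(Z_i,\Gamma)$ and the argmax is nonempty by your Step 3, the literal inclusion in $\{e_1,\ldots,e_d\}$ is false, and the correct statement is inclusion in $\{\pm e_1,\ldots,\pm e_d\}$.
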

This corollary implies that
\eqref{eq:otica_obj} is maximized by $\widehat{\bb}$ such that
$\widehat{\bb}^\top \Xb=\widehat{\bb}^\top \Ab\Zb=\Zb_i$ for some $i$, i.e., when the projection recovers an independent component.
OT-ICA optimizes the Wasserstein contrast
$W_2^2\big(\mathbb{P}_{\mathbf{b}^\top \widetilde{\mathbf{X}}}, \Gamma\big)$ through gradient ascent. 
This requires a fully differentiable implementation of the Wasserstein distance. This is straightforward in the one dimensional case, where an optimal coupling is given by matching the percentiles of the distributions, such that the Wasserstein distance is given by
\begin{equation}
    W_2^2(\mu, \nu) = \int_0^1 |F_\mu^{-1}(t) - F_\nu^{-1}(t)|^2 \, dt
\end{equation}
where $F_\mu$ denotes the CDF of $\mu$. In particular, the optimal coupling for finitely many samples simply matches the ordered values from the two empirical distributions.

\begin{figure}[!bp]
    \centering
    \includegraphics[width=0.92\columnwidth]{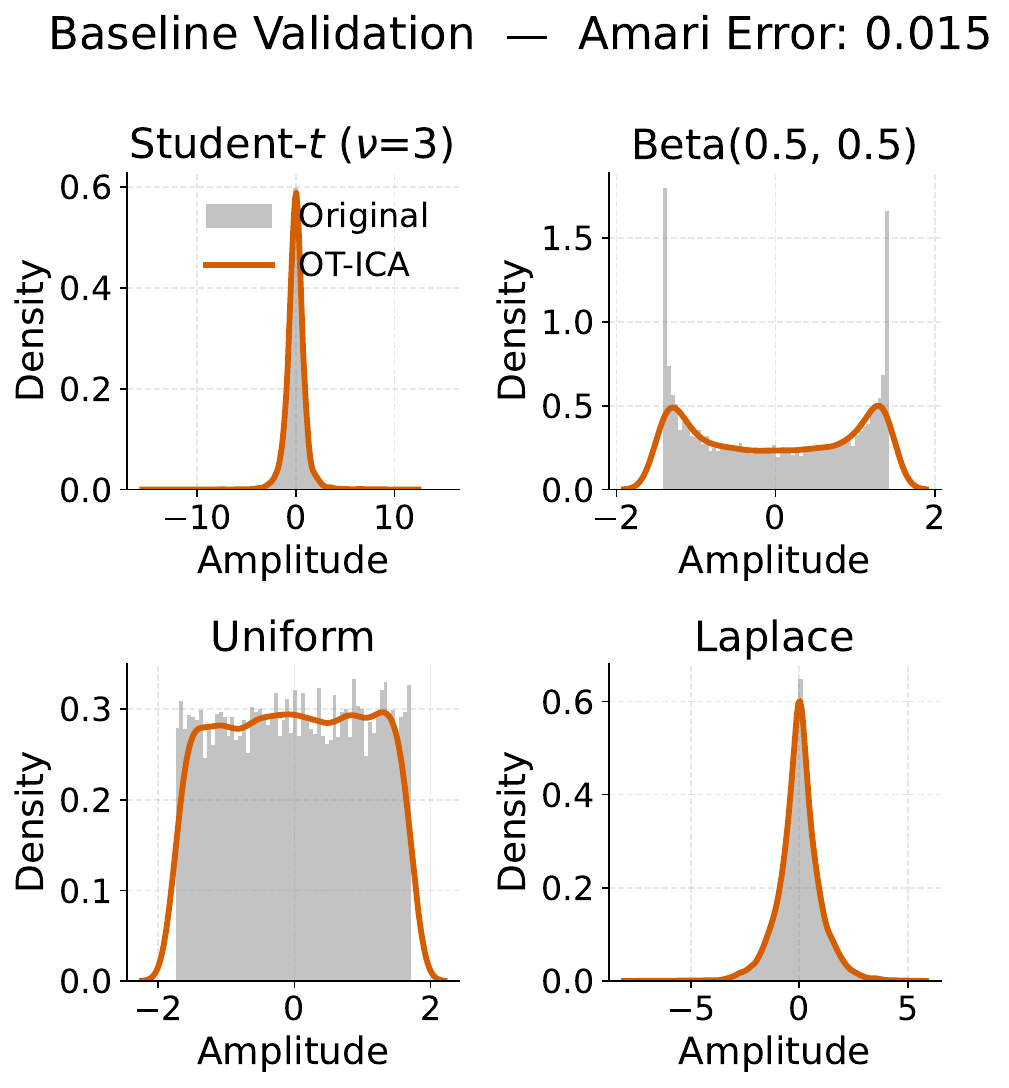}
    \caption{OT-ICA convergence on four standard continuous source types. Estimated latent components are smoothed with kernel density estimation.}
    \vspace{-.4cm}
    \label{fig:ot_ica_validation}
\end{figure}
\textbf{OT-ICA for all components.}
After whitening the data, the projection directions associated to all independent components are mutually orthogonal. While components can be extracted iteratively by searching the shrinking orthogonal complement one by one, Theorem \ref{th:main} extends to the following corollary \ref{cor:symmetric} which justifies the joint estimation of the entire unmixing matrix $\mathbf{B} \in \mathrm{O}(d)$.

\begin{corollary}\label{cor:symmetric}
    Under the assumptions of Theorem \ref{th:main}, for any orthogonal matrix $\mathbf{B} \in \mathrm{O}(d)$ with rows $\mathbf{b}_i^\top$, the total marginal contrast satisfies:
    \begin{equation}
        \sum_{i=1}^d W_2^2(\mathbf{b}_i^\top \mathbf{Z}, \Gamma) \le \sum_{j=1}^d W_2^2(Z_j, \Gamma),
    \end{equation}
    where equality holds if and only if $\mathbf{B}$ is a signed permutation matrix.
\end{corollary}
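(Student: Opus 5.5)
The plan is to apply Theorem~\ref{th:main} to each row of $\mathbf{B}$ and sum the bounds, using orthogonality to turn the weights into a doubly stochastic matrix. Fix $\mathbf{B}\in\mathrm{O}(d)$ with rows $\mathbf{b}_i^\top$. Each row is a unit vector, so $\mathbf{b}_i^\top\mathbf{Z}=\alphab^{(i)}\cdot\mathbf{Z}$ with $\alphab^{(i)}=\mathbf{b}_i$ normalized. Write $c_j \triangleq W_2^2(Z_j,\Gamma)$.

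\textbf{Row-wise bound.} For each $i$ I claim
\begin{equation*}
W_2^2(\mathbf{b}_i^\top\mathbf{Z},\Gamma)\le \sum_{j=1}^d B_{ij}^2\, c_j .
\end{equation*}
There are two cases.
\begin{itemize}
\item If $\mathbf{b}_i$ has at least two nonzero entries, Theorem~\ref{th:main} gives the inequality strictly.
\item If $\mathbf{b}_i$ has exactly one nonzero entry, then, being a unit vector, $\mathbf{b}_i=\pm e_j$ for some $j$. In that case $\mathbf{b}_i^\top\mathbf{Z}=\pm Z_j$. Because $\Gamma$ is symmetric, $W_2^2(-Z_j,\Gamma)=W_2^2(Z_j,\Gamma)$: reflect any coupling, or use $F_{-Z}^{-1}(t)=-F_Z^{-1}(1-t)$ in the quantile formula. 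So equality holds.
\end{itemize}

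\textbf{Summation.} Sum the row-wise bound over $i$. Since $\mathbf{B}$ is orthogonal, $\mathbf{B}^\top\mathbf{B}=\mathbf{I}$, so every column also has unit norm, $\sum_i B_{ij}^2=1$. Interchanging sums gives
\begin{equation*}
\sum_i W_2^2(\mathbf{b}_i^\top\mathbf{Z},\Gamma)\le\sum_j c_j\sum_i B_{ij}^2=\sum_j c_j .
\end{equation*}
In other words, the matrix $(B_{ij}^2)$ is doubly stochastic, and that is all the summation step needs.

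\textbf{Equality characterization.} If $\mathbf{B}$ is a signed permutation, every row is $\pm e_{\sigma(i)}$. Each row-wise bound is then an equality, and the left side becomes a permutation of the $c_j$, so equality holds. Conversely, suppose $\mathbf{B}$ is not a signed permutation. An orthogonal matrix whose rows each have a single nonzero entry is necessarily a signed permutation, since the rows are orthonormal and must therefore occupy distinct coordinates. Hence some row has at least two nonzero entries. For that row the bound is strict by Theorem~\ref{th:main}, while all other rows satisfy $\le$, so the summed inequality is strict.

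\textbf{Main obstacle.} Given Theorem~\ref{th:main}, there is little difficulty here. The only points needing care are the sign invariance in the single-nonzero case, which uses the symmetry of $\Gamma$, and the observation that "every row has one nonzero entry" forces $\mathbf{B}$ to be a signed permutation.
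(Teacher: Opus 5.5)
Your proof is correct and follows essentially the same route as the paper: a row-wise bound $W_2^2(\mathbf{b}_i^\top\mathbf{Z},\Gamma)\le\sum_j B_{ij}^2 W_2^2(Z_j,\Gamma)$, summation using the unit column norms of $\mathbf{B}$, and strictness from Theorem~\ref{th:main} for any row with two nonzero entries. The one difference is that the paper takes the row bound for every row from Step~1 of the theorem's proof, whereas you use only the theorem's statement and handle the $\pm e_j$ rows through the symmetry of $\Gamma$; you also spell out two points the paper leaves implicit, namely the sign invariance needed in the equality case and why one nonzero entry per row forces a signed permutation.
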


This corollary \ref{cor:symmetric} (proof in Appendix~\ref{app:proofs}) theoretically justifies \emph{symmetric joint optimization} \citep{hyvarinen2001}. Rather than iterative deflation, all $d$ rows of $\mathbf{B}$ are optimized simultaneously by ascending the total contrast sum, restoring the $\mathrm{O}(d)$ constraint via symmetric decorrelation retraction at every step. The full details of the resulting algorithm can be found in Appendix~\ref{app:methodology} and pseudocode is given in Algorithm~\ref{alg:ot_ica}. 

Let us here briefly mention three important algorithmic components utilized in practice: (i)~\textbf{Analytical Gaussian Targets} \citep{fournier2015rate}, replacing sampled quantile targets of the standard normal distribution with closed-form bin expectations to eliminate approximation noise; (ii)~\textbf{Riemannian Gradient Retraction} \citep{absil2008optimization}, projecting gradients onto the tangent space of Orthogonal Group $\mathrm{O}(d)$ and retracting via symmetric decorrelation; and (iii)~\textbf{Gaussian Dithering} \citep{schuchman1964dither}, smoothing discrete CDFs via narrow Gaussian convolution.

\section{Synthetic Experiments}\label{sec:validation}

\begin{figure*}[!tb]
\vspace{-.4cm}
    \centering
    \includegraphics[width=0.88\textwidth]{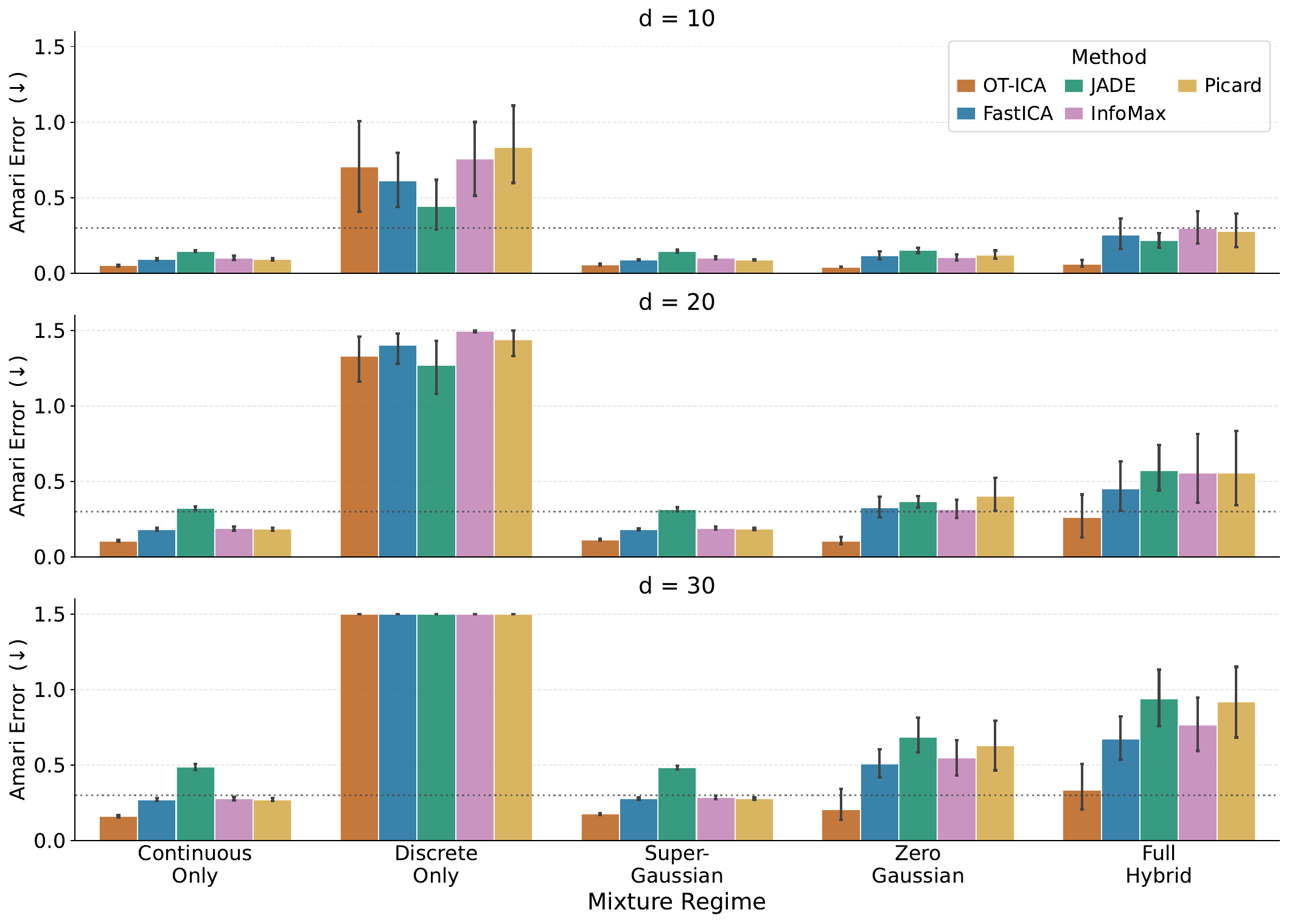}\vspace{-.2cm}
    \caption{Mean Amari error ($\downarrow$ better, clipped at $1.5$) across five mixture regimes and $d \in \{10, 20, 30\}$ ($N=10{,}000$, 10 trials, 95\% CI). 
    Dotted line marks $E=0.3$, the good-separation threshold.}
    \label{fig:hybrid_mixture_results}
    \vspace{-.2cm}
\end{figure*}

We first evaluate our method on synthetic data\footnote{Code to reproduce the experiments is available under \url{https://github.com/ashutoshjha3103/ot_in_linear_ica}}.
Separation quality is measured by the Amari Performance Index ($E$) \citep{amari1996new} (defined in Appendix~\ref{app:experimental_setup}), 
which measures the deviation of the global transfer matrix $\mathbf{P} = \mathbf{B}_{\text{est}}\mathbf{A}$ from a 
generalized permutation matrix ($E < 0.1$: near-perfect; $E < 0.3$: good separation; $E \ge 0.5$: failure).
As a sanity check we first consider separation of four different sources with $N=10,000$ samples
where OT-ICA achieves $E = 0.016$ against FastICA's $0.019$ on this  mixture (Table~\ref{tab:baseline_matrices}, Appendix~\ref{app:experimental_setup}). 
Figure~\ref{fig:ot_ica_validation}
illustrates that the latent distributions are accurately recovered.

We now study the empirical performance of the proposed algorithm systematically.
We use FastICA \citep{hyvarinen2001}, JADE \citep{cardoso1993blind}, InfoMax \citep{bell1995information}, and 
Picard \citep{ablin2018faster} as baselines and consider five mixture regimes at $d \in \{10, 20, 30\}$, $N = 10{,}000$, 10 trials per condition. The results can be found in Figure~\ref{fig:hybrid_mixture_results}.

\textbf{Continuous and heterogeneous regimes.} OT-ICA achieves the lowest Amari error on all four continuous and mixed-type configurations at every 
dimension (Table~\ref{tab:full_amari_results}, Appendix~\ref{app:experimental_setup}). On \textit{Continuous Only} sources, 
OT-ICA reduces error $40$--$45\%$ over FastICA across all three dimensions. The margin is larger on \textit{Full Hybrid} mixtures 
(Laplace, Gaussian, Uniform, Student-$t$, Beta combined): OT-ICA attains $E = 0.059$, $0.261$, $0.335$ at $d = 10, 20, 30$ against 
FastICA's $0.255$, $0.451$, $0.671$, a factor of $2$--$4\times$.

\textbf{Discrete sources.} On Discrete only mixtures, OT-ICA underperforms JADE at $d = 10$ ($0.706$ vs.\ $0.443$) and $d = 20$ ($1.331$ vs.\ $1.267$); note both values exceed $E = 0.5$, placing all methods in failure territory where differences reflect degree of failure rather than separation quality (a $d$-dimensional random orthogonal matrix yields $E \approx (d-1)/d$, e.g.\ $\approx 0.97$ at $d=30$; Appendix~\ref{app:experimental_setup}). Table~\ref{tab:w2_sensitivity} shows $W_2^2$ retains a clear non-Gaussianity signal on count data, exceeding logcosh resolution by more than $10{\times}$ on standard Poisson($\lambda=3$), so the failure is not a contrast deficiency. Step-function CDFs create gradient plateaus that stall the Riemannian solver (Appendix~\ref{app:discrete_contrast}).

OT-ICA's gradient-based Riemannian solver requires more time than FastICA's fixed-point Newton iterations, with the gap widening as $d$ increases; 
this cost is offset on heterogeneous mixtures where proxy methods fail to separate independent sources  (Appendix~\ref{app:scaling}).

\section{ Applications}\label{sec:applications}

\textbf{Structural Identification in Price Discovery.}
In fragmented financial markets, the Information Share (IS) \citep{hasbrouck1995, baillie2002} quantifies each venue's contribution to price discovery from the structural decomposition $u_t = \mathbf{B}\epsilon_t$, with $u_t$ the VECM reduced-form residuals and $\epsilon_t$ structural innovations \citep{engle1987, johansen1991}. The covariance constraint $\boldsymbol{\Omega} = \mathbf{B}\mathbf{B}^\top$ leaves $\mathbf{B}$ unidentified up to an orthogonal rotation; Cholesky identification depends on an arbitrary market ordering. Factoring $\mathbf{B} = \mathbf{S}\mathbf{C}$ \citep{zema2025} reduces identification to finding $\mathbf{C}$, which OT-ICA recovers from the non-Gaussianity of innovations without ordering assumptions (full model in Appendix~\ref{app:price_discovery}).
We validate on a simulated three-market VECM \citep{zema2025}: $\mathbf{B}_\text{true} = \mathrm{diag}(\sqrt{0.12}, \sqrt{0.24}, \sqrt{0.64})$ encodes true IS values $[0.12, 0.24, 0.64]$ by construction, with Student-$t$ innovations enabling ICA identification.
IS estimates match true values to within $0.002$ across 500 Monte Carlo runs (see Table~\ref{tab:simulated_is_results}).

\begin{table}[tbp]
    \centering
    \caption{IS recovery across 500 Monte Carlo runs without imposing Cholesky ordering.}
    \label{tab:simulated_is_results}
    \resizebox{\columnwidth}{!}{
    \begin{tabular}{@{}lccc@{}}
        \toprule
        \textbf{Market / Source} & \textbf{True IS} & \textbf{Estimated IS (Mean)} & \textbf{Std. Dev.} \\
        \midrule
        Market 1 & 0.1200 & 0.1212 & 0.0161 \\
        Market 2 & 0.2400 & 0.2399 & 0.0245 \\
        Market 3 & 0.6400 & 0.6389 & 0.0260 \\
        \bottomrule
    \end{tabular}
    }
\end{table}

\textbf{EEG Artifact Removal.}
The skull acts as a linear volume conductor \citep{hyvarinen2001}: ocular blink artifacts mix with neural signals before reaching scalp electrodes. 
Blink artifacts are impulsive and super-Gaussian; their high $W_2^2$ to Gaussian separates them from neural components without specifying a density model (dataset, preprocessing, and configuration in Appendix~\ref{app:eeg}).

\begin{figure}[tbp]
    \centering
    \includegraphics[width=0.95\columnwidth]{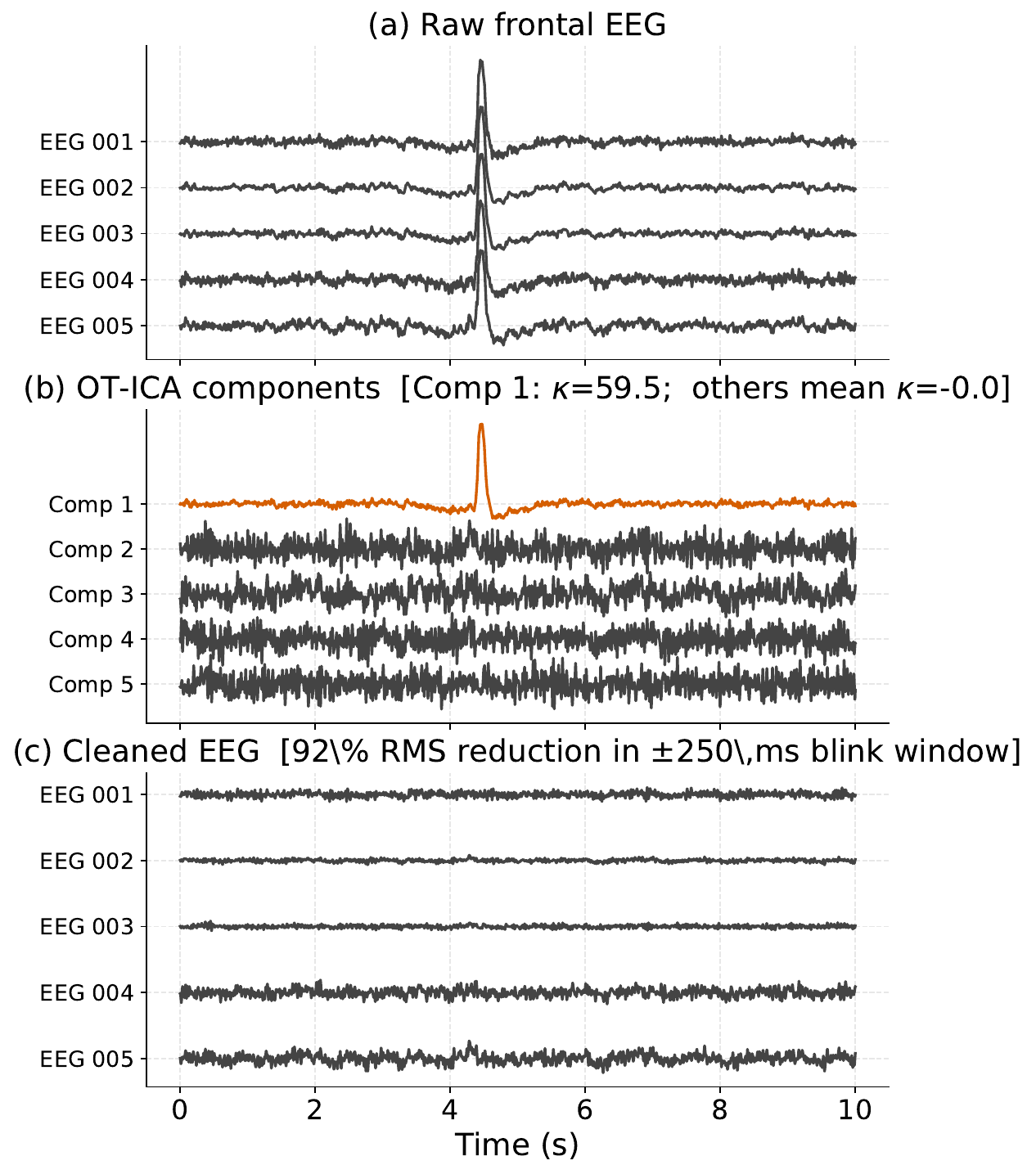}
    \caption{OT-ICA on five frontal EEG channels (MNE sample dataset). Panel (b): isolated blink component (orange) with 
    excess kurtosis $\kappa$ in the panel title; remaining components have mean $\kappa$ near zero. Panel (c): 
    signal reconstructed by zeroing the artifact component, with RMS reduction in the $\pm$250\,ms blink window in the panel title.}
    \label{fig:eeg_artifact}
\end{figure}

Figure~\ref{fig:eeg_artifact} shows OT-ICA concentrating the blink artifact into a single component with excess kurtosis 
substantially above the remaining four. Zeroing that component and inverting the total unmixing matrix produces 
the RMS reduction quantified in the panel title, without tuning a density model.

\section{Conclusion}\label{sec:conclusion}
OT-ICA replaces proxy contrast functions with the $W_2^2$ metric computed by quantile sorting; Theorem~\ref{th:main} guarantees the 
contrast is maximized at independent components, and empirical results confirm OT-ICA outperforms proxy-based algorithms on all 
continuous and heterogeneous distributions, with the margin widening on heterogeneous mixtures where proxy contrasts fail 
to extract independent components. Price discovery in simulated VECM and EEG artifact removal demonstrate successful application to ICA tasks without distributional assumptions. 
On purely discrete sources, all ICA methods fail as $d$ increases ($E > 0.5$); for OT-ICA the $W_2^2$ contrast retains signal 
throughout but step-function CDFs stall the solver, and per-iteration cost grows with $d$ (Appendix~\ref{app:scaling}). 
These limitations point to concrete improvements: Sinkhorn distances \citep{cuturi2013sinkhorn, peyre2019computational} as 
a differentiable substitute for Gaussian dithering, and amortized optimal transport to reduce per-inference sorting cost. 
Beyond these, OT-ICA's distribution-free contrast opens new directions in causal discovery, including extensions to 
LiNGAM \citep{shimizu2006linear}, Causal Component Analysis \citep{liang2023causal}, and identifiable nonlinear ICA \citep{buchholz2022function}.

\bibliography{references}

\newpage
\onecolumn
\appendix

\begin{center}
    \Large\textbf{Supplementary Material for:}\\
    \large\textbf{Linear Independent Component Analysis via Optimal Transport}
\end{center}

\section{Background and Motivation}
\label{app:info_geometry}

\subsection{Centering, Whitening, and the Orthogonal Group $\mathrm{O}(d)$ Search Space}

The ICA model $\mathbf{X} = \mathbf{A}\mathbf{S}$ has $d^2$ free parameters in the mixing matrix $\mathbf{A}$.
Two lossless preprocessing steps reduce this to a $\tfrac{d(d-1)}{2}$-dimensional rotational search.

\textbf{Centering.} Subtracting the mean, $\mathbf{X} \gets \mathbf{X} - \mathbb{E}[\mathbf{X}]$, ensures the data has zero mean.
If $\mathbf{X} = \mathbf{A}\mathbf{S}$, centering $\mathbf{X}$ is equivalent to centering $\mathbf{S}$, leaving the mixing geometry intact.

\textbf{Whitening.} Given the eigendecomposition $\mathrm{Cov}(\mathbf{X}) = \mathbf{V}\boldsymbol{\Lambda}\mathbf{V}^\top$, the whitening transform $\mathbf{Q} = \boldsymbol{\Lambda}^{-1/2}\mathbf{V}^\top$ yields $\widetilde{\mathbf{X}} = \mathbf{Q}\mathbf{X}$ with $\mathrm{Cov}(\widetilde{\mathbf{X}}) = \mathbf{I}$ \citep{hyvarinen2001}.
After whitening, the unmixing operation $\mathbf{B}\widetilde{\mathbf{X}}$ must satisfy $\mathrm{Cov}(\mathbf{B}\widetilde{\mathbf{X}}) = \mathbf{B}\mathbf{B}^\top = \mathbf{I}$, so $\mathbf{B}$ lies in Orthogonal Group $\mathrm{O}(d)$.
The original mixing matrix decomposes as $\mathbf{A} = \mathbf{Q}^{-1}\mathbf{B}$, so ICA on whitened data is lossless \citep{cardoso2022}.

\subsection{The CLT Motivation for Non-Gaussianity}

The Central Limit Theorem (CLT) provides the following intuition: a normalized sum of independent non-Gaussian variables converges to a Gaussian as the number of terms grows \citep{hyvarinen2001}. In ICA, any linear mixture $\mathbf{b}\cdot\mathbf{S}$ with multiple nonzero weights is therefore \emph{more} Gaussian than the individual sources. Recovering an independent source means finding the projection that is \emph{least} Gaussian, motivating the maximization of non-Gaussianity as the ICA objective.

\subsection{Information Geometry: Connecting Independence and Non-Gaussianity}

Following \citet{cardoso2022}, ICA can be understood geometrically via two Pythagorean identities.
Let $G(\cdot) = D_{\mathrm{KL}}(\cdot \,\|\, \mathcal{N}(\mathrm{Cov}\,\cdot))$ denote non-Gaussianity and $C(Y) = D_{\mathrm{KL}}(\mathcal{N}(\mathrm{Cov}\,Y) \,\|\, \mathcal{N}(\mathrm{diag\,Cov}\,Y))$ denote correlation.

\textbf{Product manifold identity.} For any product distribution $Q = \prod_i q_i$:
\begin{equation}
    D_{\mathrm{KL}}(P_Y \Vert Q) = \mathscr{I}(Y) + \sum_{i} D_{\mathrm{KL}}(P_{Y_i} \Vert q_i).
\end{equation}

\textbf{Gaussian manifold identity.} For any Gaussian $\mathcal{N}(\Sigma)$:
\begin{equation}
    D_{\mathrm{KL}}(P_Y \Vert \mathcal{N}(\Sigma)) = G(Y) + D_{\mathrm{KL}}(\mathcal{N}(\mathrm{Cov}\,Y) \Vert \mathcal{N}(\Sigma)).
\end{equation}

Evaluating the KL divergence from $P_Y$ to $\mathcal{N}(\mathrm{diag\,Cov}\,Y)$ via both paths yields Cardoso's identity:
\begin{equation}
    \mathscr{I}(Y) + \sum_{i} G(Y_i) = G(Y) + C(Y).
\end{equation}
After whitening, $\mathrm{Cov}(Y) = \mathbf{I}$ forces $C(Y) = 0$, giving $\mathscr{I}(Y) = G(Y) - \sum_i G(Y_i)$.
Since $G(Y)$ is invariant under orthogonal rotation, minimizing $\mathscr{I}(Y)$ (independence) equals maximizing $\sum_i G(Y_i)$ (total marginal non-Gaussianity). This is Theorem \ref{th:pythagorean_ig_ica}.

\subsection{Optimal Transport as the Non-Gaussianity Contrast}

Classical ICA algorithms approximate $G(Y_i)$ with proxies. OT-ICA instead measures it directly as the cost of transporting the empirical marginal to a standard Gaussian.

\begin{definition}[Coupling and Push-Forward]
A \emph{coupling} of distributions $\mu$ and $\nu$ is a joint distribution $\pi$ on $\mathbb{R}^d \times \mathbb{R}^d$ with marginals $\mu$ and $\nu$; the set of all such couplings is $\Pi(\mu,\nu)$.
A measurable map $T\colon\mathbb{R}^d\to\mathbb{R}^d$ \emph{pushes forward} $\mu$ to $\nu$ (written $T_\#\mu = \nu$) if $\nu(B) = \mu(T^{-1}(B))$ for every Borel set $B$.
\end{definition}

The $p$-Wasserstein distance is the minimum transport cost over all couplings:
\begin{equation}
    W_p(\mu,\nu) = \left(\inf_{\pi\in\Pi(\mu,\nu)} \int \|x-y\|^p\,d\pi(x,y)\right)^{1/p}.
\end{equation}
The structure of the optimal coupling $\pi$ varies qualitatively depending on whether the marginals are discrete, continuous, or mixed (\cite{peyre2019computational}).
In 1D, Brenier's theorem guarantees the optimal coupling is induced by the monotone quantile map $T = F_\nu^{-1}\circ F_\mu$, so $W_2^2(\mu,\nu) = \int_0^1 |F_\mu^{-1}(t) - F_\nu^{-1}(t)|^2\,dt$.

Taking $\nu = \Gamma \equiv \mathcal{N}(0,1)$, the quantity $W_2^2(\mathbb{P}_{\mathbf{b}^\top\widetilde{\mathbf{X}}},\Gamma)$ measures how far the projection $\mathbf{b}^\top\widetilde{\mathbf{X}}$ is from Gaussian.
Unlike negentropy proxies, $W_2^2$ is a true metric, geometry-aware, and computable by sorting without density estimation.
OT-ICA maximizes this over $\mathbf{b}$ in Orthogonal Group $\mathrm{O}(d)$; Theorem 3 shows the maximum is attained at pure source directions.

\section{Mathematical Proofs for OT-ICA Bounds}
\label{app:proofs}

\subsection{Proof of Theorem 3: $W_2^2$ ICA Contrast}

\setcounter{theorem}{2}
\begin{theorem}[$W_2^2$ ICA Contrast]
Let $\mathbf{Z} = (Z_1, \dots, Z_d)^\top$ be  centered independent sources with unit variance and at most one $Z_i$ Gaussian. Assume they all have a smooth and strictly positive density. Then the following bound holds for  any normalized weight vector $\mathbf{\alpha}$ with at least two nonzero components
\begin{equation}
    W_2^2(\mathbf{\alpha} \cdot \mathbf{Z}, \Gamma) < \sum_{i=1}^d \alpha_i^2 W_2^2(Z_i, \Gamma).
\end{equation}
\end{theorem}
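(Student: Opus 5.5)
Write $\Gamma=\mathcal N(0,1)$ and let $G=(G_1,\dots,G_d)$ be i.i.d.\ standard Gaussians.

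\textbf{Step 1: reduce to a covariance computation.} Since each $Z_i$ has unit variance, and $\Gamma$ too,
\[
W_2^2(Z_i,\Gamma)=\mathbb E Z_i^2+\mathbb E G_i^2-2\sup_{\pi}\mathbb E_\pi[Z_iG_i]=2-2\rho_i,
\]
where $\rho_i=\mathbb E[Z_i T_i(Z_i)]$ and $T_i=F_\Gamma^{-1}\circ F_{Z_i}$ is the monotone (Brenier) map. The densities are smooth and strictly positive, so $T_i$ is a smooth, strictly increasing bijection of $\mathbb R$. Likewise, with $\|\alpha\|=1$ we have $\operatorname{Var}(\alpha\cdot Z)=1$ and
\[
W_2^2(\alpha\cdot Z,\Gamma)=2-2\sup_{\pi\in\Pi(\alpha\cdot Z,\Gamma)}\mathbb E_\pi[(\alpha\cdot Z)\,Y].
\]
Since $\sum_i\alpha_i^2=1$, the claimed inequality is equivalent to
\[
\sup_{\pi}\mathbb E_\pi[(\alpha\cdot Z)Y] > \sum_i\alpha_i^2\rho_i .
\]

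\textbf{Step 2: build a competitor coupling.} Set $Y:=\sum_i\alpha_i T_i(Z_i)$. The $T_i(Z_i)$ are independent standard Gaussians, and $\sum\alpha_i^2=1$, so $Y\sim\Gamma$. Hence $(\alpha\cdot Z,\,Y)$ is an admissible coupling. By independence and centering ($\mathbb E Z_i=0$), the cross terms vanish and
\[
\mathbb E[(\alpha\cdot Z)Y]=\sum_{i,j}\alpha_i\alpha_j\,\mathbb E[Z_iT_j(Z_j)]=\sum_i\alpha_i^2\rho_i .
\]
This already gives the non-strict inequality $W_2^2(\alpha\cdot Z,\Gamma)\le\sum_i\alpha_i^2W_2^2(Z_i,\Gamma)$.

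\textbf{Step 3: strictness (the main obstacle).} By uniqueness of the optimal coupling in 1D between absolutely continuous laws, equality would force this coupling to be the monotone one. That is, $Y$ would have to be a nondecreasing function of $S:=\alpha\cdot Z$, say $Y=\phi(S)$.

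Take two indices $i\neq j$ with $\alpha_i,\alpha_j\neq0$. Then
\[
\alpha_iT_i(z_i)+\alpha_jT_j(z_j)+R=\phi\bigl(\alpha_iz_i+\alpha_jz_j+R'\bigr)
\]
must hold almost surely, where $R,R'$ collect the remaining coordinates. The joint density is positive everywhere, so this holds on all of $\mathbb R^d$ by continuity. Here $\phi$ is continuous because the map $S\mapsto Y$ is a transport between positive densities.

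Now move along the direction $(z_i,z_j)\mapsto(z_i+t\alpha_j,\;z_j-t\alpha_i)$, which keeps $S$ fixed. The right-hand side does not change, so
\[
\alpha_iT_i(z_i+t\alpha_j)+\alpha_jT_j(z_j-t\alpha_i)
\]
is constant in $t$. Differentiating gives $T_i'(z_i)=T_j'(z_j)$ for all $z_i,z_j$. Hence both $T_i'$ and $T_j'$ equal a common constant $c$, so the $T_i$ are affine. An affine map with unit variance on both sides that sends $Z_i$ to a Gaussian means $Z_i$ and $Z_j$ are both Gaussian. This contradicts the assumption that at most one $Z_i$ is Gaussian, so the inequality is strict.

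The delicate points are these:
\begin{itemize}
\item justifying that equality forces the coupling to be the monotone one, using uniqueness of the optimal plan in $\Pi(\alpha\cdot Z,\Gamma)$, since $\alpha\cdot Z$ has a density;
\item promoting the a.s.\ identity to a pointwise one before differentiating.
\end{itemize}
Smoothness and strict positivity of the densities are exactly what make both of these go through.
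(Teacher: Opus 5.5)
Your proof is correct. Its first half is the paper's argument: your coupling $(\alpha\cdot Z,\sum_i\alpha_iT_i(Z_i))$ has the same joint law as the paper's $(\sum_i\alpha_iT_i^{-1}(N_i),\sum_i\alpha_iN_i)$, since the paper's maps push $\Gamma$ onto $Z_i$ and are the inverses of yours. The cross terms vanish for the same reason in both.

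The strictness step is where you take a different, more elementary route. The paper asserts that equality forces $\nabla X(\mathbf N)=\lambda(\mathbf N)\nabla Y(\mathbf N)$, with $X$ the mixture and $Y$ the Gaussian, both as functions of $\mathbf N$. It then differentiates again and matches the diagonal Hessian $\mathrm{diag}(\alpha_iT_i''(N_i))$ against the rank-one matrix $\alpha(\nabla\lambda)^\top$. This needs second derivatives of the maps and differentiability of the unspecified scalar $\lambda$, which is never justified. The claim ``equality iff gradients parallel'' is also only gestured at via Brenier.

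You instead derive $Y=\phi(\alpha\cdot Z)$ from uniqueness of the one-dimensional optimal plan, where $\phi=F_\Gamma^{-1}\circ F_{\alpha\cdot Z}$ is merely continuous. You upgrade this to a pointwise identity using the strictly positive joint density. You then differentiate only along level sets of $\alpha\cdot Z$, so $\phi$ is never differentiated. All you need is $T_i\in C^1$, which is immediate from $T_i=F_\Gamma^{-1}\circ F_{Z_i}$. Continuous positive densities would therefore suffice, and no appeal to Caffarelli is needed.

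Your argument also shows explicitly that both sources carrying nonzero weight would have to be Gaussian. That is exactly where the hypothesis ``at most one $Z_i$ Gaussian'' enters, a point the paper's write-up glosses. In effect, you give a rigorous version of what the paper's relation already yields at first order: its components read $T_i'(N_i)=\lambda(\mathbf N)=T_j'(N_j)$, so the Hessian computation is unnecessary.
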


\textit{Strategy.} We first establish a weak upper bound on $W_2^2(\mathbf{\alpha}\cdot\mathbf{Z},\Gamma)$ (used as an intermediate step), then show the bound is strict for any proper mixture under the non-Gaussianity conditions, proving Theorem 3.

\medskip\noindent\textit{Step 1: Upper Bound (intermediate).}

\begin{proof}
     We construct a specific joint probability distribution, or coupling, using a common source of randomness. 
Let $\mathbf{N} = (N_1, \dots, N_d)^\top$ be a vector of $d$ independent standard normal variables, $\mathbf{N} \sim \mathcal{N}(0, \mathbf{I}_d)$, 
assumed to be statistically independent of the sources $\mathbf{Z}$. For each independent latent source $Z_i$, there exists 
an optimal transport map $T_i$ that pushes forward the standard normal distribution to the source distribution, denoted as $(T_i)_{\#} \Gamma = Z_i$. 

We construct a random variable $X$ representing the mixture:
\begin{equation}
    X = \sum_{i=1}^d \alpha_i T_i(N_i).
\end{equation}
To compare this mixture to a Gaussian distribution, we construct a target variable $Y$ using the same underlying noise vector $\mathbf{N}$:
\begin{equation}
    Y = \sum_{i=1}^d \alpha_i N_i.
\end{equation}
Given that the weight vector has unit norm ($\sum_i \alpha_i^2 = 1$), the resulting variable $Y$ follows the standard normal distribution, $Y \sim \Gamma$. 
The pair $(X, Y)$ creates a valid coupling. Because $W_2^2$ is the infimum over all valid couplings, the optimal distance is less than or equal to the cost of our constructed coupling:
\begin{equation}
    W_2^2(\mathbf{\alpha} \cdot \mathbf{Z}, \Gamma) \le \mathbb{E}\left[|X - Y|^2\right] = \mathbb{E}\left[ \left( \sum_{i=1}^d \alpha_i \big(T_i(N_i) - N_i\big) \right)^2 \right].
\end{equation}
Squaring the summation produces diagonal terms ($i=j$) and cross terms ($i \neq j$):
\begin{equation}
    |X - Y|^2 = \sum_{i=1}^d \alpha_i^2 \big(T_i(N_i) - N_i\big)^2 + \sum_{i \neq j} \alpha_i \alpha_j \big(T_i(N_i) - N_i\big)\big(T_j(N_j) - N_j\big).
\end{equation}
Because the underlying noise variables $N_i$ and $N_j$ are independent, and the functions evaluate to a mean of zero, when we take the expectation, 
all cross terms vanish. We are left with the expectation of the diagonal terms:
\begin{equation}
    \mathbb{E}\left[|X - Y|^2\right] = \sum_{i=1}^d \alpha_i^2 \mathbb{E}\left[ \big(T_i(N_i) - N_i\big)^2 \right].
\end{equation}
Recognizing that $\mathbb{E}[ (T_i(N_i) - N_i)^2 ]$ is the definition of the squared Wasserstein distance between the individual source $Z_i$ and $\Gamma$, we arrive at the bound:
\begin{equation}
    W_2^2(\mathbf{\alpha} \cdot \mathbf{Z}, \Gamma) \le \sum_{i=1}^d \alpha_i^2 W_2^2(Z_i, \Gamma). \quad 
\end{equation}
\medskip\noindent\textit{Step 2: Strict Inequality (proof of Theorem 3).}
In a one-dimensional setting, the $W_2$ distance equals the expected squared difference of a specific coupling 
if and only if the random variables are comonotonic. By Brenier's Theorem \citep{brenier1991}, this requires the gradients 
to be parallel at every point in the probability space:
\begin{equation}
    \nabla X(\mathbf{N}) = \lambda(\mathbf{N}) \nabla Y(\mathbf{N}).
\end{equation}
Assuming the source distributions possess smooth and strictly positive densities, Caffarelli's regularity theory \citep{caffarelli1992} 
guarantees that the optimal transport maps $T_i$ are continuously differentiable. 

We differentiate both sides to compute their Hessian matrices. On the LHS, the cross-derivatives $\frac{\partial^2 X}{\partial N_i \partial N_j}$ are zero, 
yielding a diagonal matrix of $\alpha_i T''_i(N_i)$. On the RHS, we differentiate the product $\lambda(\mathbf{N}) \mathbf{b}$, yielding a Rank-1 outer product matrix:
\begin{equation}
    \underbrace{
    \begin{bmatrix} 
    \alpha_1 T''_1(N_1) & 0 & \cdots & 0 \\
    0 & \alpha_2 T''_2(N_2) & \cdots & 0 \\
    \vdots & \vdots & \ddots & \vdots \\
    0 & 0 & \cdots & \alpha_d T''_d(N_d)
    \end{bmatrix}
    }_{\text{Hessian of } X \text{ (Diagonal)}}
    =
    \underbrace{
    \begin{bmatrix}
    \alpha_1 \frac{\partial \lambda}{\partial N_1} & \cdots & \alpha_1 \frac{\partial \lambda}{\partial N_d} \\
    \vdots & \ddots & \vdots \\
    \alpha_d \frac{\partial \lambda}{\partial N_1} & \cdots & \alpha_d \frac{\partial \lambda}{\partial N_d}
    \end{bmatrix}
    }_{\text{Outer Product } \mathbf{b} (\nabla \lambda)^\top \text{ (Rank-1)}}
\end{equation}
Examining any off-diagonal entry ($i \neq j$), the LHS dictates it must be zero, establishing $\alpha_i \frac{\partial \lambda}{\partial N_j} = 0$. 
Because we assume a mixture where multiple weights are non-zero ($\alpha_i \neq 0$), it follows that the gradient of the scaling function is 
zero ($\nabla \lambda = \mathbf{0}$). Consequently, $\lambda(\mathbf{N})$ is a global scalar constant, $\lambda$.

Equating the diagonal elements implies $T'_i(N_i) = \lambda$. Integrating this indicates the optimal transport maps are linear functions of the form:
\begin{equation}
    T_i(x) = \lambda x + c.
\end{equation}
Applying a purely linear transformation to Gaussian noise $N_i$ produces another Gaussian distribution. 
The identifiability assumption of ICA dictates that the original latent sources $Z_i$ are non-Gaussian. 
Therefore, the linear map requirement contradicts the non-Gaussianity of the sources. 
Because the condition for equality cannot be met, the relationship is a strict inequality. 

\begin{equation}
    W_2^2(\mathbf{\alpha} \cdot \mathbf{Z}, \Gamma) < \sum_{i=1}^d \alpha_i^2 W_2^2(Z_i, \Gamma).
\end{equation}
\end{proof}

\subsection{Proof of Corollary 2: Symmetric Joint Maximization}

\begin{proof}
Let $\mathbf{B} \in \mathrm{O}(d)$ be an orthogonal matrix with rows $\mathbf{b}_i^\top$. Because $\mathbf{B}$ is orthogonal, its squared elements sum to $1$ across both rows ($\sum_{j=1}^d B_{ij}^2 = 1$) and columns ($\sum_{i=1}^d B_{ij}^2 = 1$).
By the intermediate step of Theorem \ref{th:main}, the contrast for each row projection is bounded by:
\begin{equation}
    W_2^2(\mathbf{b}_i^\top \mathbf{Z}, \Gamma) \le \sum_{j=1}^d B_{ij}^2 W_2^2(Z_j, \Gamma).
\end{equation}
Summing this inequality over all $d$ rows yields an upper bound on the total marginal contrast:
\begin{align}\begin{split}
    \sum_{i=1}^d W_2^2(\mathbf{b}_i^\top \mathbf{Z}, \Gamma) &\le \sum_{i=1}^d \sum_{j=1}^d B_{ij}^2 W_2^2(Z_j, \Gamma) \\
    &= \sum_{j=1}^d \left( \sum_{i=1}^d B_{ij}^2 \right) W_2^2(Z_j, \Gamma) \\
    &= \sum_{j=1}^d W_2^2(Z_j, \Gamma).
    \end{split}
\end{align}
The maximum possible sum equals the unweighted sum of the pure sources' contrasts. Because Theorem \ref{th:main} establishes that the inequality is strict whenever a row mixes multiple sources, the global equality holds if and only if every row $\mathbf{b}_i$ contains exactly one non-zero entry (which must be $\pm 1$ due to the unit norm constraint). Thus, the sum is maximized if and only if $\mathbf{B}$ is a signed permutation matrix.
\end{proof}

\section{Algorithmic Enhancements and Methodology}
\label{app:methodology}
This appendix explains the additional ingredients used in the OT-ICA algorithm.
\subsection{Analytical Gaussian Targets}
Discrete approximation of the target Gaussian distribution introduces approximation noise. To eliminate this noise, 
we replace point-sampled targets with an analytical formulation, computing the expected value of a standard normal variable within each discrete quantile bin.

\begin{definition}[Analytical Gaussian Target]
Let the uniform probability bin edges for $N$ samples be defined as $p_i = \frac{i}{N}$ for $i = 0, \dots, N$, 
with corresponding Gaussian domain boundaries $z_i = \Phi^{-1}(p_i)$, where $\Phi$ is the standard normal CDF. The analytical target value $T_i$ for the $i$-th sorted sample is:
\begin{equation}
    T_i = N \int_{z_{i-1}}^{z_i} x \phi(x) \, dx = N \big( \phi(z_{i-1}) - \phi(z_i) \big).
\end{equation}
\end{definition}

\subsection{Riemannian Gradient and Retraction}
Because the input data is whitened, the unmixing matrix $\mathbf{B}$ must remain in Orthogonal Group $\mathrm{O}(d)$, satisfying $\mathbf{B}\mathbf{B}^\top = \mathbf{I}$. 
Computing the standard Euclidean gradient $\mathbf{G} = \nabla_{\mathbf{B}} W_2^2$ points into the unconstrained ambient space.

\begin{figure}[htbp]
    \centering
    \includegraphics[width=0.65\textwidth]{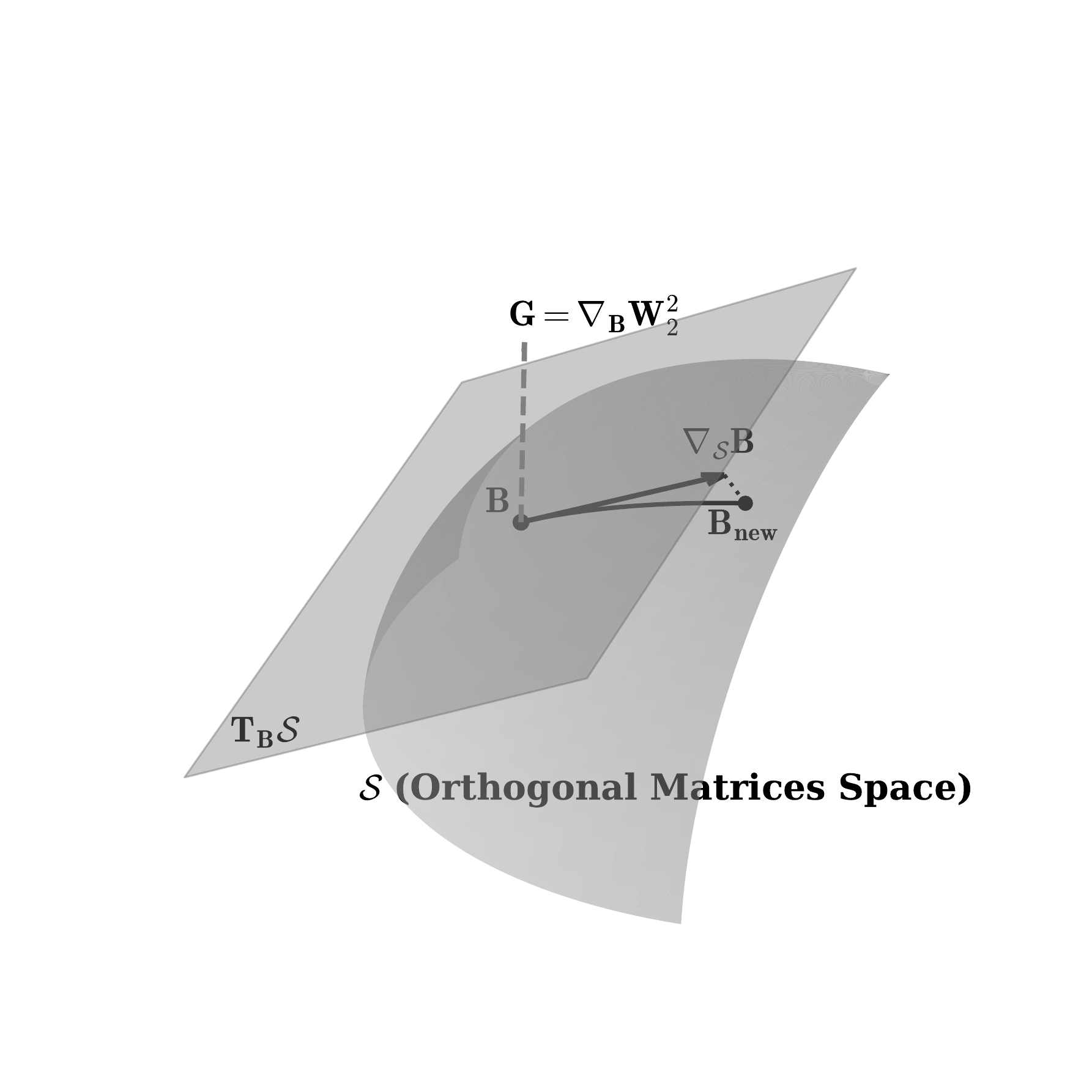}
    \caption{Geometric sketch of optimization on Orthogonal Group $\mathrm{O}(d)$. The Euclidean gradient $\mathbf{G}$ is projected onto
    the tangent space to yield the Riemannian gradient $\nabla_{\mathrm{O}(d)} \mathbf{B}$. A retraction maps the estimate back to the orthogonal surface.}
    \label{fig:stiefel_manifold}
\end{figure}

\begin{definition}[Riemannian Gradient on Orthogonal Group $\mathrm{O}(d)$]
Given the Euclidean gradient $\mathbf{G}$, the Riemannian gradient resulting from the projection onto the tangent space $T_{\mathbf{B}}\mathrm{O}(d)$ of Orthogonal Group $\mathrm{O}(d)$ is given by \citep{absil2008optimization}:
\begin{equation}
    \nabla_{\mathrm{O}(d)} \mathbf{B} = \mathbf{G} - \frac{1}{2}\big(\mathbf{G}\mathbf{B}^\top + \mathbf{B}\mathbf{G}^\top \big)\mathbf{B}.
\end{equation}
\end{definition}

\begin{definition}[Symmetric Decorrelation Retraction]
To map the matrix back to the orthogonal surface after a tangent step ($\mathbf{B}_{\text{step}}$), we compute the overlap covariance $\mathbf{C} = \mathbf{B}_{\text{step}}\mathbf{B}_{\text{step}}^\top$ and apply the inverse square root:
\begin{equation}
    \mathbf{B}_{\text{new}} = \mathbf{C}^{-1/2} \mathbf{B}_{\text{step}}.
\end{equation}
\end{definition}

\subsection{Continuous Smoothing via Gaussian Dithering}
Applying continuous optimal transport directly to discrete mixtures introduces non-smooth step-functions. 
Adapted from signal processing \citep{schuchman1964dither}, we inject continuous, zero-mean Gaussian noise with variance $\sigma_\text{dither}^2 = 0.01$ into the 1D projected components prior to sorting. 
This decorrelates deterministic quantization errors and convolves the discrete empirical distribution with a Gaussian kernel.

\begin{figure}[htbp]
    \centering
    \includegraphics[width=0.65\textwidth]{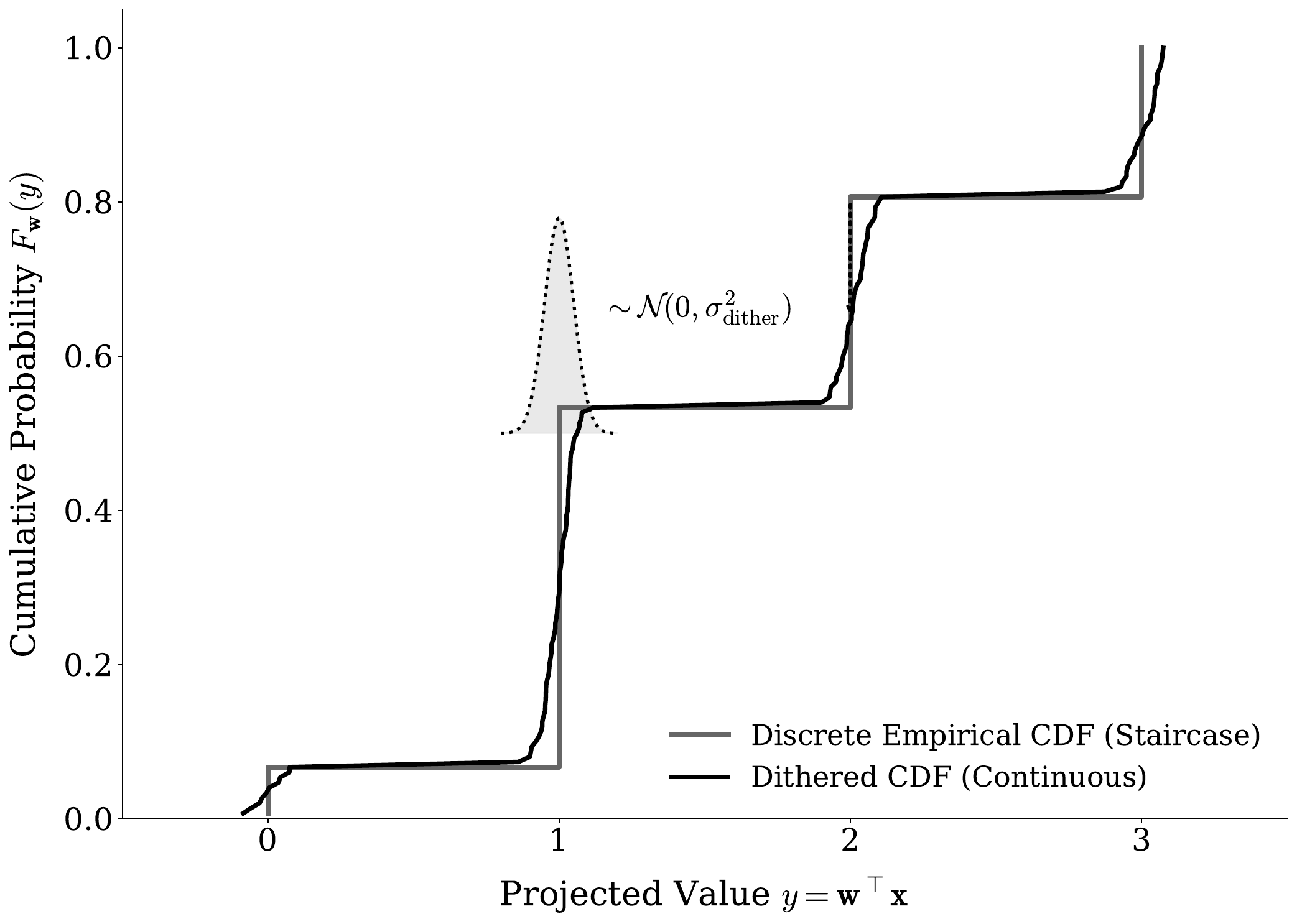}
    \caption{The non-differentiable staircase of a discrete empirical CDF (gray) is convolved with a narrow Gaussian kernel via dithering, yielding a continuous, differentiable curve (black).}
    \label{fig:gaussian_dithering}
\end{figure}

\subsection{Total Complexity and Statistical Efficiency}
For a dataset of dimension $d$ with $N$ samples, $K$ random restarts (batched into a tensor $\mathbf{B}_{\text{batch}}$), and $T$ optimization iterations, the per-iteration complexity is:
\begin{equation}
    \mathcal{O}(K \cdot (d \cdot N \log N + d^3))
\end{equation}
The $N \log N$ term represents the sorting of projections, while the $d^3$ term represents the cost of the symmetric decorrelation used for retraction.

\subsection{The OT-ICA Algorithm}
\label{app:algo}
The mutual orthogonality of ICs in the whitened space admits an iterative extraction strategy in which each component is found in the orthogonal complement of previously extracted ones.
We implement this as \emph{symmetric joint optimization}: all rows of $\mathbf{B}$ are updated simultaneously under the $\mathrm{O}(d)$ constraint via symmetric decorrelation retraction.
Symmetric optimization is preferred because it eliminates error accumulation across extraction steps and ensures mutual orthogonality of all components is maintained at every iteration rather than enforced only by projection.

\begin{algorithm}[htbp]
\caption{The Optimal Transport ICA (OT-ICA) Algorithm}
\label{alg:ot_ica}
\begin{algorithmic}[1]\label{algo:1}
\Require Observed mixture $\mathbf{X} \in \mathbb{R}^{d \times N}$, iterations $K$, learning rate $\eta$, batch size $N_{\text{batch}}$, dither noise $\sigma_{\text{dither}}$
\Ensure Estimated unmixing matrix $\mathbf{B}_{\text{final}} \in \mathbb{R}^{d \times d}$
\State \textbf{Phase 1: Preprocessing \& Target Generation}
\State Center the data: $\mathbf{X} \gets \mathbf{X} - \mathbb{E}[\mathbf{X}]$
\State Whiten the data: $\widetilde{\mathbf{X}} \gets (\mathbf{X}\mathbf{X}^\top)^{-1/2} \mathbf{X}$
\State Compute analytical target $\mathbf{T} \in \mathbb{R}^{N_{\text{batch}}}$ via analytical Gaussian CDF integration
\State \textbf{Phase 2: Initialization}
\State Initialize $\mathbf{B} \in \mathbb{R}^{d \times d}$ randomly from $\mathcal{N}(0,1)$
\State Apply symmetric decorrelation: $\mathbf{B} \gets (\mathbf{B}\mathbf{B}^\top)^{-1/2} \mathbf{B}$
\State \textbf{Phase 3: Stochastic Riemannian Optimization}
\For{$k = 1$ to $K$}
    \State Sample stochastic mini-batch $\widetilde{\mathbf{X}}_{\text{batch}} \in \mathbb{R}^{d \times N_{\text{batch}}}$ from $\widetilde{\mathbf{X}}$
    \State Project data onto candidates: $\mathbf{Y} \gets \mathbf{B} \widetilde{\mathbf{X}}_{\text{batch}}$
    \State Apply Gaussian Dithering: $\mathbf{Y}_{\text{dither}} \gets \mathbf{Y} + \mathcal{N}(0, \sigma_{\text{dither}}^2)$
    \State Sort each row of $\mathbf{Y}_{\text{dither}}$ ascending to yield empirical quantiles $\mathbf{Y}_{\text{sorted}}$
    \State Compute Wasserstein objective: $\mathcal{L} = \frac{1}{N_{\text{batch}}} \sum_{i=1}^{d} ||\mathbf{Y}_{\text{sorted}}^{(i)} - \mathbf{T}||_2^2$
    \State Backpropagate to compute Euclidean gradient: $\mathbf{G} = \nabla_{\mathbf{B}} \mathcal{L}$
    \State Project to tangent space of Orthogonal Group $\mathrm{O}(d)$: $\nabla_{\mathrm{O}(d)} \mathbf{B} = \mathbf{G} - \frac{1}{2}\big(\mathbf{G}\mathbf{B}^\top + \mathbf{B}\mathbf{G}^\top \big)\mathbf{B}$
    \State Update matrix along tangent plane: $\mathbf{B}_{\text{step}} = \mathbf{B} + \eta \nabla_{\mathrm{O}(d)} \mathbf{B}$
    \State Retract to Orthogonal Group $\mathrm{O}(d)$: $\mathbf{B} \gets (\mathbf{B}_{\text{step}}\mathbf{B}_{\text{step}}^\top)^{-1/2} \mathbf{B}_{\text{step}}$
\EndFor
\State \textbf{Phase 4: Finalization}
\State $\mathbf{B}_{\text{final}} \gets \mathbf{B} (\mathbf{X}\mathbf{X}^\top)^{-1/2}$ 
\State \Return $\mathbf{B}_{\text{final}}$
\end{algorithmic}
\end{algorithm}

\section{Experimental Methodology and Metrics}
\label{app:experimental_setup}

\subsection{The Generalized Permutation Matrix and Amari Index}
To empirically evaluate the success of source separation, we quantify the distance between the estimated unmixing matrix $\mathbf{B}_{\text{est}}$ and 
the true mixing matrix $\mathbf{A}$. Because ICA is subject to scaling and permutation ambiguities, the target recovery is a generalized permutation matrix. 
Let $\mathbf{P} = \mathbf{B}_{\text{est}} \mathbf{A}$ be the global transfer matrix. For a $d \times d$ matrix $\mathbf{P}$ with elements $p_{ij}$, 
the Amari Performance Index \citep{amari1996new} measures the structural divergence:
\begin{equation}
    E(\mathbf{P}) = \frac{1}{2d} \sum_{i=1}^d \left( \sum_{j=1}^d \frac{|p_{ij}|}{\max_k |p_{ik}|} - 1 \right) + \frac{1}{2d} \sum_{j=1}^d \left( \sum_{i=1}^d \frac{|p_{ij}|}{\max_k |p_{kj}|} - 1 \right).
\end{equation}

\subsection{Computational Resource Regimes}
Because OT-ICA utilizes stochastic gradient descent while FastICA employs fixed-point Newton iterations, we standardize resources via two defined regimes to ensure fairness:
\begin{itemize}
    \item \textbf{Low Compute Baseline:} OT-ICA is allocated restarts $\min(4 \times d, 150)$ and 150/200 deflation/symmetric phase iterations.
    FastICA is allocated 50 random restarts and 1,000 maximum iterations per component.
    \item \textbf{High Compute Regime:} OT-ICA is allocated restarts $\min(15 \times d, 600)$ and 300/500 iterations.
    FastICA is allocated 50 restarts and 5,000 maximum iterations per component.
\end{itemize}

\subsection{Methodology Validation: Global Transfer Matrices}
\label{app:validation_table}

We validated OT-ICA on a 4D mixture of Laplace, Uniform, Student-$t(3)$, and Beta$(0.5, 0.5)$ sources with $N=10{,}000$, mixed by the random matrix $\mathbf{A}$ below. The global transfer matrices $\mathbf{P} = \hat{\mathbf{B}}\mathbf{A}$ for both algorithms are near-permutation, with Amari errors below $0.02$.

\begin{table}[htbp]
\centering
\small
\begin{tabular}{cc}
\toprule
\multicolumn{2}{c}{\textbf{True Mixing Matrix $\mathbf{A}$}} \\
\midrule
\multicolumn{2}{c}{$\begin{bmatrix}
 1.058 &  1.520 & -0.249 &  1.012 \\
 0.042 & -0.486 &  0.388 & -0.523 \\
-0.154 & -1.572 & -0.304 & -1.234 \\
-0.006 &  0.503 &  0.053 &  0.928
\end{bmatrix}$} \\
\midrule
\textbf{OT-ICA Transfer Matrix $\mathbf{P}$} & \textbf{FastICA Transfer Matrix $\mathbf{P}$} \\
\midrule
$\begin{bmatrix}
-0.012 & -0.009 & -1.000 & -0.003 \\
 0.000 &  0.003 & -0.001 &  1.000 \\
-0.007 &  1.000 &  0.000 &  0.012 \\
-1.000 & -0.007 &  0.005 & -0.010
\end{bmatrix}$
&
$\begin{bmatrix}
-1.000 & -0.012 &  0.003 & -0.008 \\
 0.012 & -1.000 &  0.001 & -0.004 \\
-0.002 & -0.010 & -0.001 & -1.000 \\
-0.010 & -0.010 & -1.000 & -0.001
\end{bmatrix}$ \\
\midrule
\multicolumn{2}{c}{\textbf{Amari Error}: OT-ICA $= 0.0155$ \quad FastICA $= 0.0188$} \\
\bottomrule
\end{tabular}
\caption{Global transfer matrices for the 4D validation mixture. Off-diagonal entries are $\leq 0.012$ in absolute value for both algorithms, confirming near-perfect source recovery.}
\label{tab:baseline_matrices}
\end{table}

\subsection{Full Benchmark Results}
\label{app:full_results}

Table~\ref{tab:full_amari_results} reports mean Amari errors for all five methods across all mixture regimes at $d \in \{10, 20, 30\}$ ($N=10{,}000$, 10 trials). Values at 1.500 indicate total separation failure (clip ceiling). \textbf{Bold} marks the lowest error per row. The threshold $E < 0.3$ separates meaningful separation from failure: for a $d$-dimensional random orthogonal matrix, the expected Amari error approaches $\frac{d-1}{d}$ (e.g., $\approx 0.97$ at $d=30$), so differences among methods above $E=0.3$ reflect degree of failure rather than quality of separation; fine-grained comparisons in that region are nonetheless preserved for completeness.

\begin{table}[htbp]
\centering
\setlength{\tabcolsep}{10pt}
\begin{tabular}{llccccc}
\toprule
\textbf{Regime} & $d$ & \textbf{OT-ICA} & \textbf{FastICA} & \textbf{JADE} & \textbf{InfoMax} & \textbf{Picard} \\
\midrule
\multirow{3}{*}{Continuous Only}
 & 10 & \textbf{0.050} & 0.092 & 0.146 & 0.100 & 0.093 \\
 & 20 & \textbf{0.106} & 0.183 & 0.321 & 0.188 & 0.184 \\
 & 30 & \textbf{0.160} & 0.271 & 0.489 & 0.277 & 0.271 \\
\midrule
\multirow{3}{*}{Strictly Super-Gaussian}
 & 10 & \textbf{0.057} & 0.089 & 0.146 & 0.103 & 0.090 \\
 & 20 & \textbf{0.113} & 0.183 & 0.316 & 0.190 & 0.184 \\
 & 30 & \textbf{0.176} & 0.280 & 0.482 & 0.286 & 0.280 \\
\midrule
\multirow{3}{*}{Zero Gaussian}
 & 10 & \textbf{0.042} & 0.118 & 0.152 & 0.105 & 0.121 \\
 & 20 & \textbf{0.103} & 0.327 & 0.366 & 0.315 & 0.403 \\
 & 30 & \textbf{0.207} & 0.510 & 0.686 & 0.549 & 0.628 \\
\midrule
\multirow{3}{*}{Full Hybrid}
 & 10 & \textbf{0.059} & 0.255 & 0.216 & 0.297 & 0.277 \\
 & 20 & \textbf{0.261} & 0.451 & 0.571 & 0.558 & 0.555 \\
 & 30 & \textbf{0.335} & 0.671 & 0.938 & 0.765 & 0.917 \\
\midrule
\multirow{3}{*}{Discrete Only}
 & 10 & 0.706 & 0.613 & \textbf{0.443} & 0.757 & 0.835 \\
 & 20 & 1.331 & 1.402 & \textbf{1.267} & 1.495 & 1.440 \\
 & 30 & 1.500 & 1.500 & 1.500 & 1.500 & 1.500 \\
\midrule
\multirow{3}{*}{Pure Laplace ($\mu=0,, b=1/\sqrt{2}$)}
 & 10 & \textbf{0.077} & 0.080 & 0.136 & 0.083 & 0.080 \\
 & 20 & \textbf{0.165} & 0.170 & 0.285 & 0.177 & 0.170 \\
 & 30 & 0.280 & \textbf{0.261} & 0.439 & 0.272 & 0.262 \\
\midrule
\multirow{3}{*}{Pure Uniform ($-\sqrt{3},, \sqrt{3}$)}
 & 10 & \textbf{0.038} & 0.054 & 0.051 & 0.056 & 0.054 \\
 & 20 & \textbf{0.094} & 0.116 & 0.110 & 0.122 & 0.116 \\
 & 30 & 1.500 & 0.179 & \textbf{0.171} & 0.187 & 0.180 \\
\midrule
\multirow{3}{*}{Pure Student-$t$ ($\nu=3$)}
 & 10 & 0.068 & 0.065 & 0.134 & \textbf{0.064} & 0.065 \\
 & 20 & 0.142 & \textbf{0.135} & 0.284 & 0.136 & 0.136 \\
 & 30 & 0.219 & \textbf{0.207} & 0.436 & 0.208 & 0.208 \\
\midrule
\multirow{3}{*}{Pure Chi-square $\chi^2(k=2)$}
 & 10 & \textbf{0.039} & 0.094 & 0.120 & 0.087 & 0.094 \\
 & 20 & \textbf{0.085} & 0.205 & 0.255 & 0.190 & 0.205 \\
 & 30 & \textbf{0.129} & 0.315 & 0.392 & 0.292 & 0.316 \\
\midrule
\multirow{3}{*}{Pure Exponential ($\lambda=1$)}
 & 10 & \textbf{0.039} & 0.094 & 0.120 & 0.087 & 0.094 \\
 & 20 & \textbf{0.085} & 0.205 & 0.255 & 0.190 & 0.205 \\
 & 30 & \textbf{0.129} & 0.315 & 0.392 & 0.292 & 0.316 \\
\bottomrule
\end{tabular}
\caption{Mean Amari error ($\downarrow$ better, clipped at 1.500) for all five methods and mixture regimes ($N=10{,}000$, 10 trials). OT-ICA achieves the lowest error on all five mixed-source configurations. On pure single-distribution sources, OT-ICA leads on Chi-square and Exponential at all $d$; FastICA leads on Student-$t$ and Laplace ($d=30$); OT-ICA fails on Uniform at $d=30$. On Discrete Only, JADE leads at $d \le 20$; all methods including OT-ICA saturate the clip ceiling at $d=30$.}
\label{tab:full_amari_results}
\end{table}

\subsection{Discrete Distributions: Contrast vs.\ Optimization}
\label{app:discrete_contrast}
The \textit{Discrete Only} failure is mechanistically distinct from the continuous failure modes above. Table~\ref{tab:w2_sensitivity} compares
the non-Gaussianity resolution of $W_2^2$ against the logcosh proxy negentropy $(\mathbb{E}[G(x)]-\mathbb{E}[G(\nu)])^2$ across standard and highly non-Gaussian parameterizations.

\begin{table}[htbp]
\centering
\begin{tabular}{lcc}
\toprule
\textbf{Distribution} & $\mathbf{W_2^2}$ & \textbf{Logcosh Negentropy} \\
\midrule
Laplace (continuous baseline)        & 0.0398 & 0.001214 \\
Binomial (standard, $n=10$)          & 0.0344 & 0.000031 \\
Poisson  (standard, $\lambda=3.0$)   & 0.0513 & ${\approx}0$ \\
Binomial (non-Gaussian, $n=2$)       & 0.2022 & 0.000267 \\
Poisson  (non-Gaussian, $\lambda=0.5$) & 0.3384 & 0.000085 \\
\bottomrule
\end{tabular}
\caption{$W_2^2$ provides ${\gg}10\times$ greater non-Gaussianity resolution than logcosh on count-based data. Standard Poisson ($\lambda=3.0$)
evaluates to near-zero under logcosh yet retains a clear $W_2^2$ signal. The failure to unmix discrete sources therefore cannot originate from
an insufficient $W_2^2$ contrast; the issue is in the optimization landscape.}
\label{tab:w2_sensitivity}
\end{table}

\begin{figure}[htbp]
    \centering
    \includegraphics[width=0.9\textwidth]{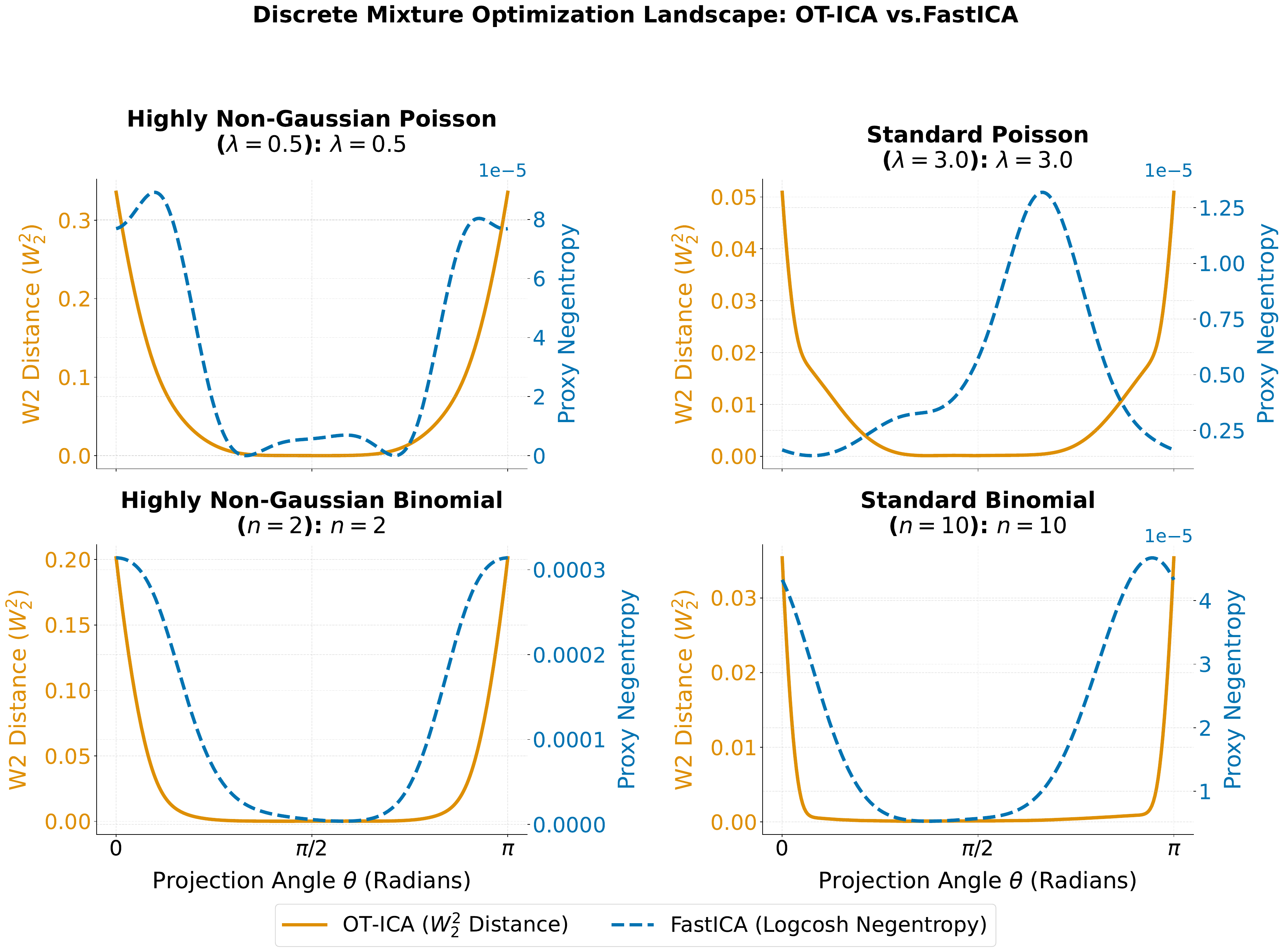}
    \caption{$W_2^2$ and logcosh optimization landscapes for two-source Poisson and Binomial mixtures across parameterizations.
    The $W_2^2$ landscape retains a non-zero signal throughout (confirming the contrast is present), but the step-function geometry of discrete CDFs
    creates flat plateaus where the gradient evaluates to near-zero, stalling gradient-based solvers. The logcosh landscape degenerates to near-zero
    for standard count-based parameterizations, indicating a fundamental contrast failure.}
    \label{fig:optim_surfaces}
\end{figure}

\section{Proxy Contrast Based ICA Algorithms}
\label{app:proxy_limits}
This appendix collects a brief overview of the baselines used in the paper.
\subsection{FastICA: Negentropy Approximation Failures}
FastICA \citep{hyvarinen2001} approximates negentropy using a non-quadratic contrast function $G(\cdot)$, such as the \textit{logcosh} function ($\mathbb{E}[G(x)] - \mathbb{E}[G(\nu)]$, where $\nu \sim \mathcal{N}(0,1)$). We demonstrate two limitations of this proxy approach.

\subsubsection{The Zero Negentropy Condition}
If a latent independent source possesses a non-Gaussian distribution such that its expected value under the contrast function equals that of a Gaussian ($\mathbb{E}[G(s_i)] = \mathbb{E}[G(\nu)]$), the approximated negentropy evaluates to zero. In this scenario, the objective function provides no gradient signal, and the algorithm fails to extract the source. As shown in Figure~\ref{fig:zero_negentropy}, this failure leads to increasing Amari error with increasing dimensions.

\begin{figure}[htbp]
    \centering
    \includegraphics[width=0.85\textwidth]{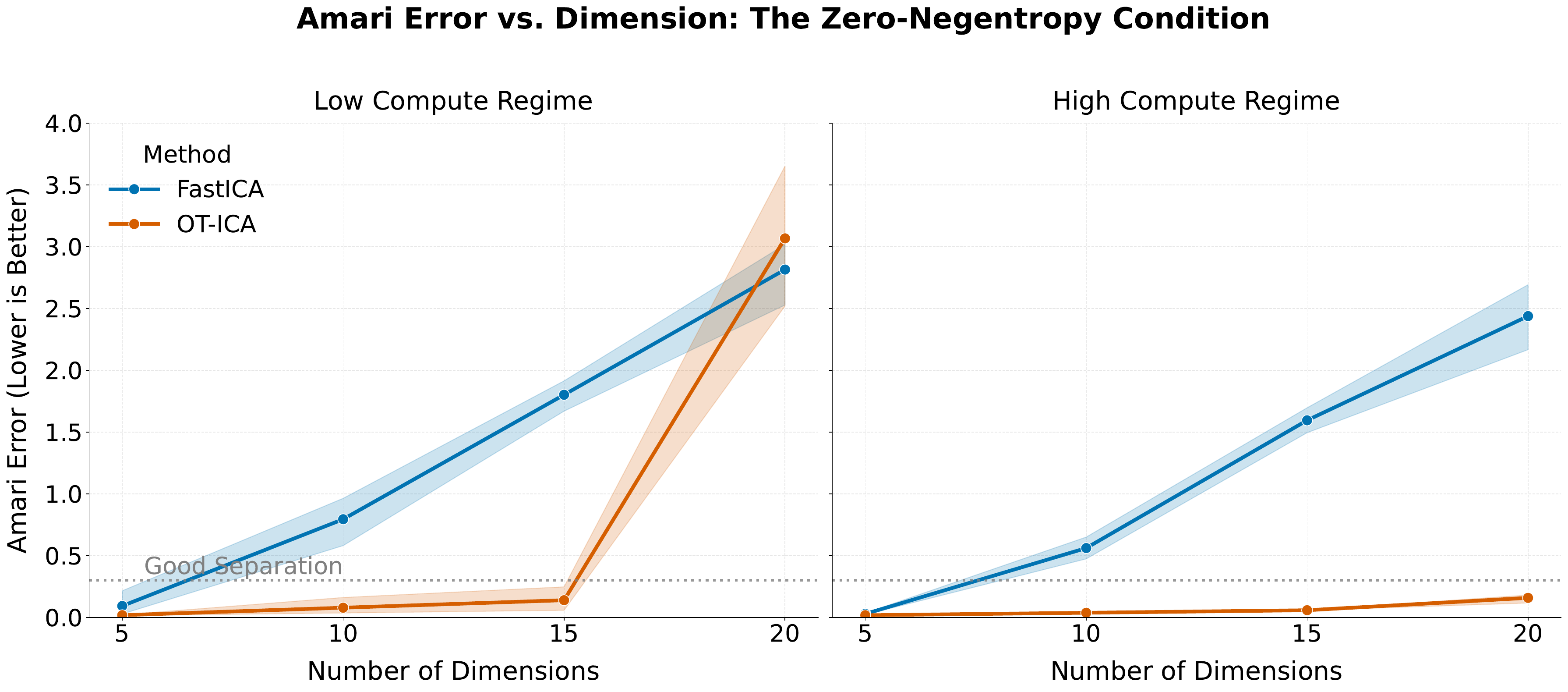} 
    \caption{Amari error comparison for an engineered zero-negentropy distribution across Low and High compute regimes. FastICA's error scales with dimension as the proxy gradient vanishes, crossing the good separation threshold ($E=0.3$). OT-ICA maintains good separation till higher dimensions depending on compute regime.}
    \label{fig:zero_negentropy}
\end{figure}

\subsubsection{The Vanishing Curvature Condition}
FastICA utilizes a Newton fixed-point iteration. Defining $g(x) = G'(x)$ and $g'(x) = G''(x)$, the algorithm approximates the Hessian matrix. For a given component $i$, the diagonal entry scales according to the expectation $\mathbf{H}_{ii} \approx \mathbb{E}\big[s_i g(s_i) - g'(s_i)\big]$.

The Newton update requires applying the inverse of this Hessian ($\mathbf{H}^{-1}$). If a non-Gaussian source distribution causes the expectation in the denominator to evaluate to zero ($\mathbb{E}[s_i g(s_i) - g'(s_i)] \to 0$), the inverse becomes undefined \cite[Chapter 8]{hyvarinen2001}. This analytical divide-by-zero error makes the fixed-point update step intractable. Figure~\ref{fig:vanishing_curvature_combined} illustrates this failure mode, displaying the specific trimodal distribution that induces zero curvature and the resulting divergence in separation performance compared to our OT-ICA framework.

\begin{figure}[htbp]
    \centering
    \begin{minipage}{0.85\textwidth}
        \centering
        \includegraphics[width=\textwidth]{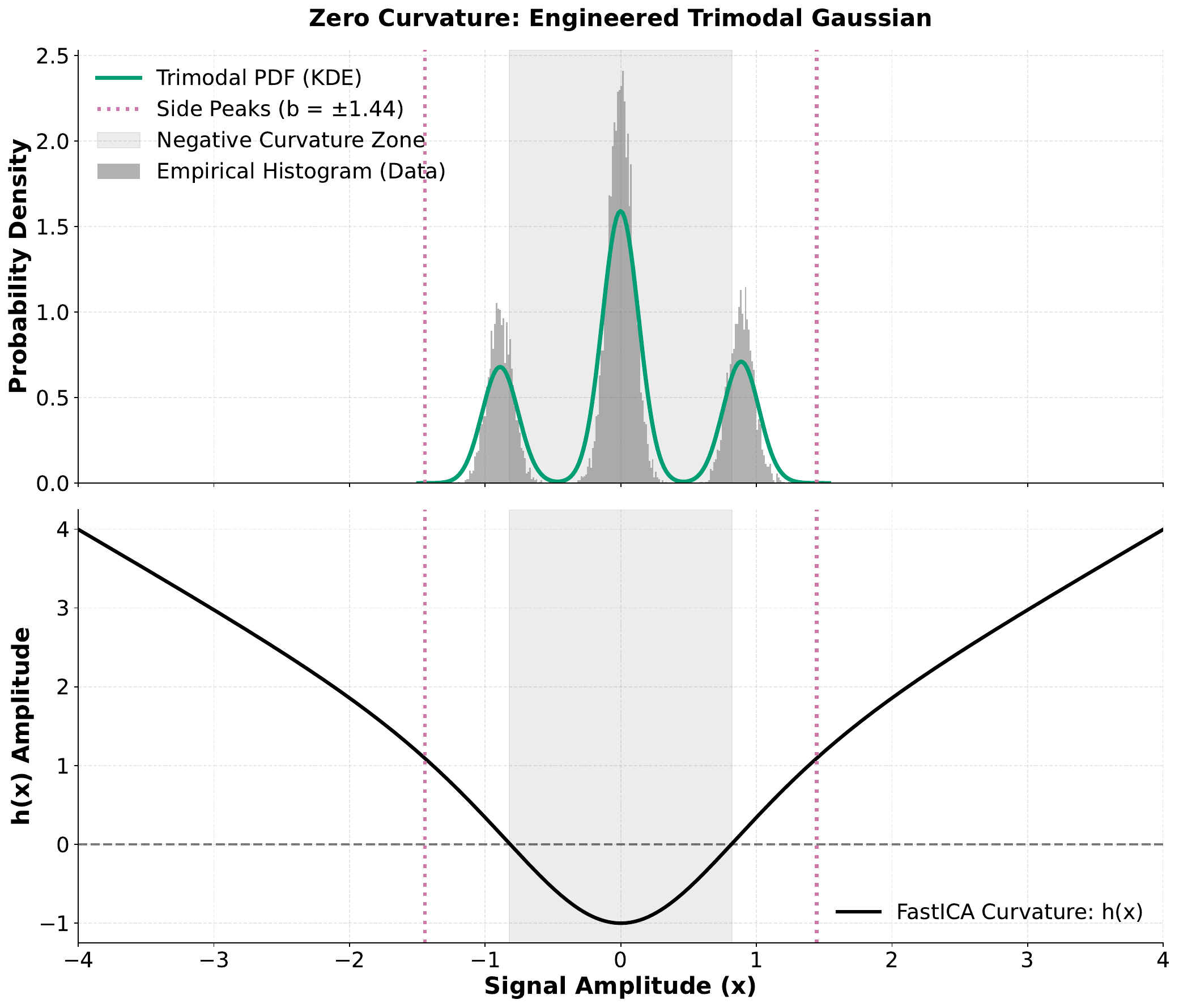}
    \end{minipage}
    
    \vspace{0.5cm} %
    
    \begin{minipage}{0.85\textwidth}
        \centering
        \includegraphics[width=\textwidth]{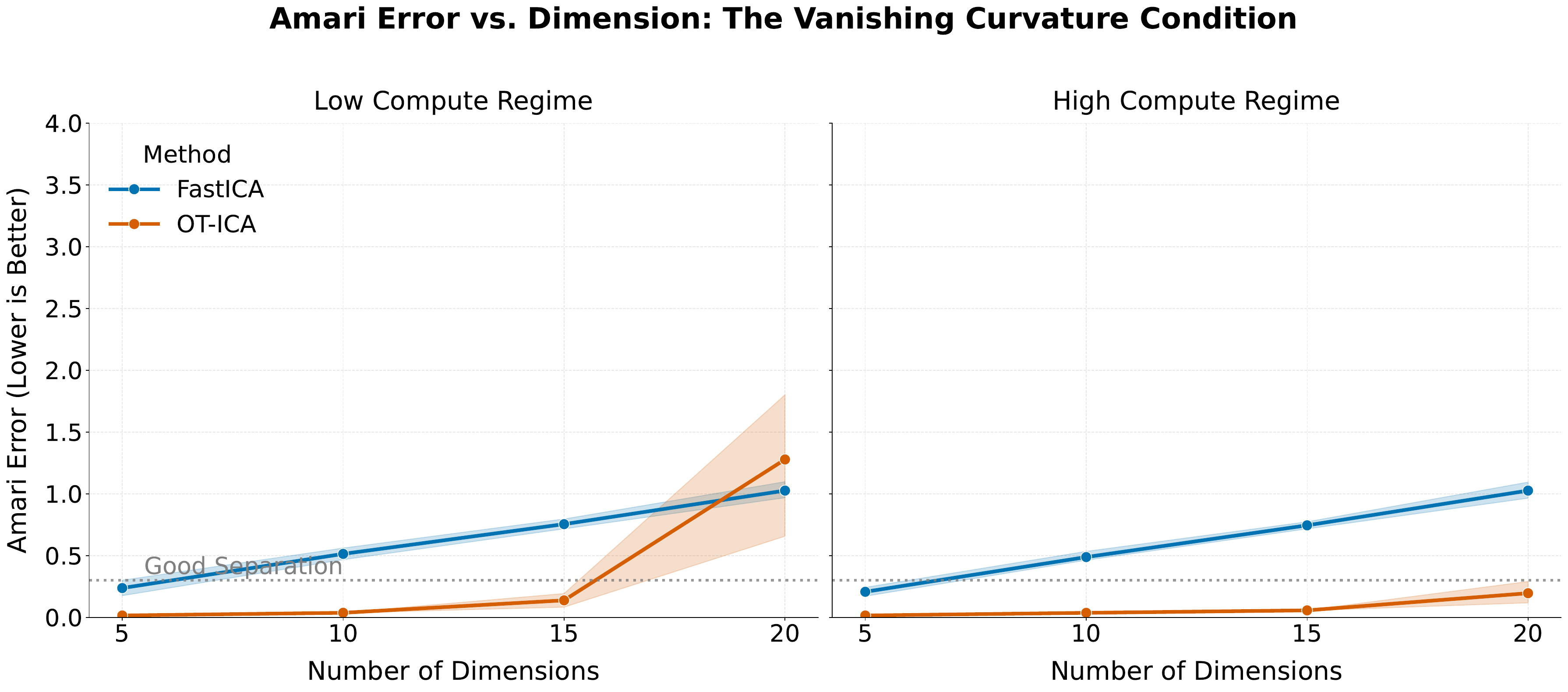}
    \end{minipage}
    
    \caption{The Vanishing Curvature failure mode. \textbf{Top:} A trimodal distribution engineered such that negative central curvature and positive outer curvature (side peaks at $b=\pm 1.44$) cancel ($\mathbb{E}[sg(s)-g'(s)] = 0$) while maintaining unit variance. \textbf{Bottom:} Amari error comparison on mixtures containing this distribution. FastICA's Newton solver diverges rapidly as dimension increases across both compute regimes. OT-ICA maintains good separation.}
    \label{fig:vanishing_curvature_combined}
\end{figure}

\subsection{JADE: Sample Complexity of Fourth-Order Cumulants}
JADE \citep{cardoso1993blind} identifies sources by jointly diagonalizing a set of fourth-order cumulant matrices via Jacobi sweeps. 
Reliable estimation of the full cumulant tensor requires $\mathcal{O}(d^4)$ samples \citep{cardoso1993blind, hyvarinen2001}. 
At $d=30$, this implies approximately $810{,}000$ samples; against the $N=10{,}000$ available in our benchmark, the cumulant matrices are heavily undersampled. 
The resulting joint diagonalization converges on estimated tensors reflecting sampling noise rather than true source structure, yielding $E=0.489$ 
on continuous sources and $E=0.938$ on hybrid sources at $d=30$, both approaching the failure threshold despite JADE's consistency on well-sampled data.

\subsection{InfoMax: Score Function Misspecification}
InfoMax \citep{bell1995information} maximizes a log-likelihood under a fixed logistic nonlinearity, implicitly treating all sources as sub-Gaussian. 
On mixtures containing super-Gaussian components (Laplace, Student-$t$), the score function is misspecified: the gradient points away from 
the true unmixing direction. \citet{lee1999extended} documented this failure mode explicitly, proposing extended InfoMax with online source-type switching. 
That approach partially addresses the misspecification but requires correct per-component classification, an assumption OT-ICA avoids entirely. 
In our benchmark, InfoMax produces $E=0.765$ on hybrid sources and $E=0.549$ on Zero Gaussian sources at $d=30$, against OT-ICA's $0.335$ and $0.207$ respectively, 
consistent with the misspecified-score explanation.

\subsection{Picard: The Contrast as the Binding Constraint}
Picard \citep{ablin2018faster} replaces InfoMax's gradient ascent with an L-BFGS preconditioned solver on the same log-likelihood objective, 
achieving provably faster convergence per iteration. In our benchmark, FastICA and Picard produce Amari errors within $0.003$ of each other across 
all continuous configurations at each dimension. This near-identical performance isolates the source of failure: solver speed is not the binding constraint. 
The shared tanh nonlinearity, and not the optimizer, determines the error floor. Replacing it with an exact, distribution-free contrast reduces error by $40$--$45\%$ on continuous sources.

\section{Computational Scaling and Dimensionality Limits}
\label{app:scaling}

We benchmarked OT-ICA against FastICA using a linear mixture of continuous Laplace sources across increasing dimensions with a fixed sample size of $N=10,000$.

\begin{figure}[htbp]
    \centering
    \includegraphics[width=0.83\textwidth]{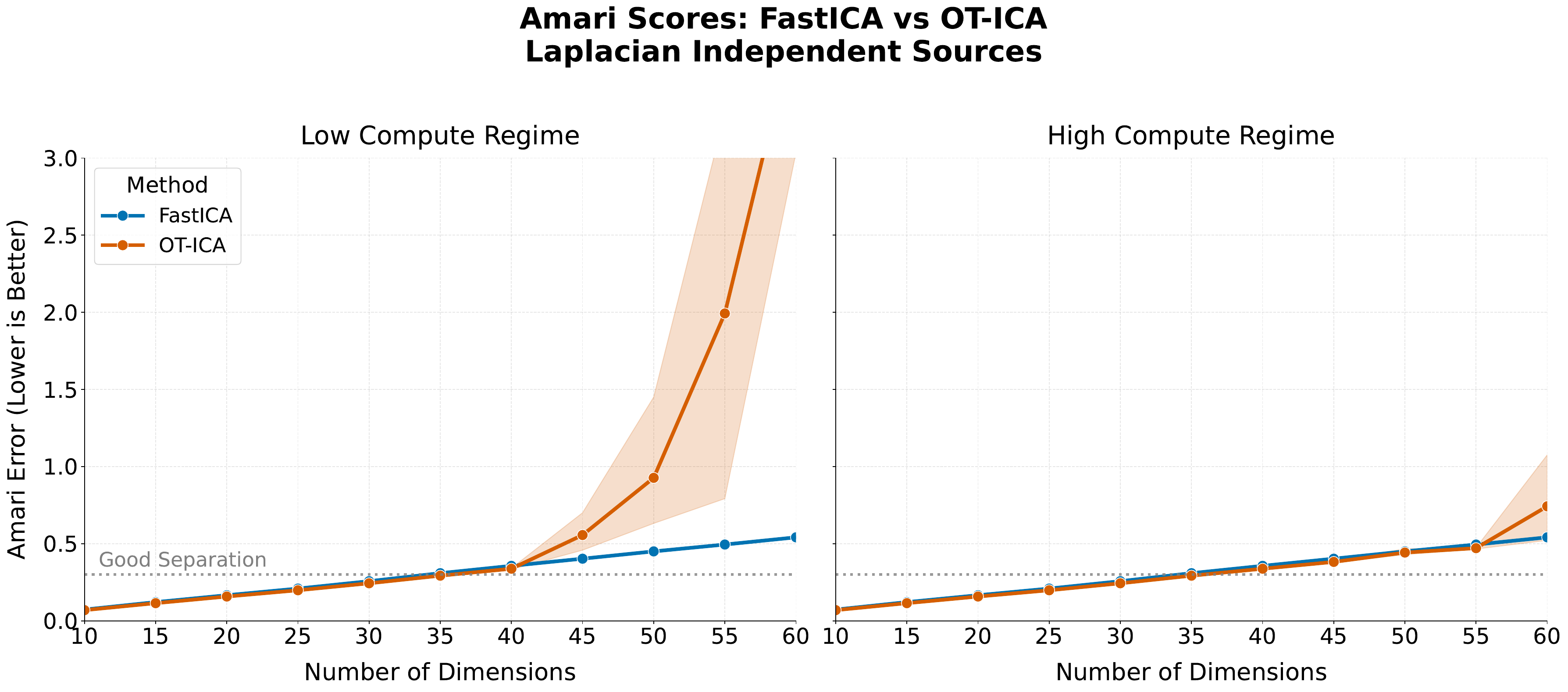}
    \caption{Amari error across dimensions for Laplacian sources at fixed $N=10{,}000$ samples. Both OT-ICA and FastICA hit 
    the same curse-of-dimensionality noise ceiling at $d=40$, a finite-sample sparsity effect shared by all empirical ICA methods at this regime.}
\end{figure}

While OT-ICA maintains error rates comparable to FastICA ($E < 0.3$) in lower dimensions, both algorithms exhibit a loss in unmixing quality at $d=40$. 
This limit is a manifestation of the curse of dimensionality. The expected error of the empirical Wasserstein distance scales as $\mathcal{O}(N^{-1/d})$. 
As $d$ grows, a fixed finite sample fails to densely populate the state space, creating massive empty regions that swallow the gradient signal.

\begin{figure}[htbp]
    \centering
    \includegraphics[width=0.83\textwidth]{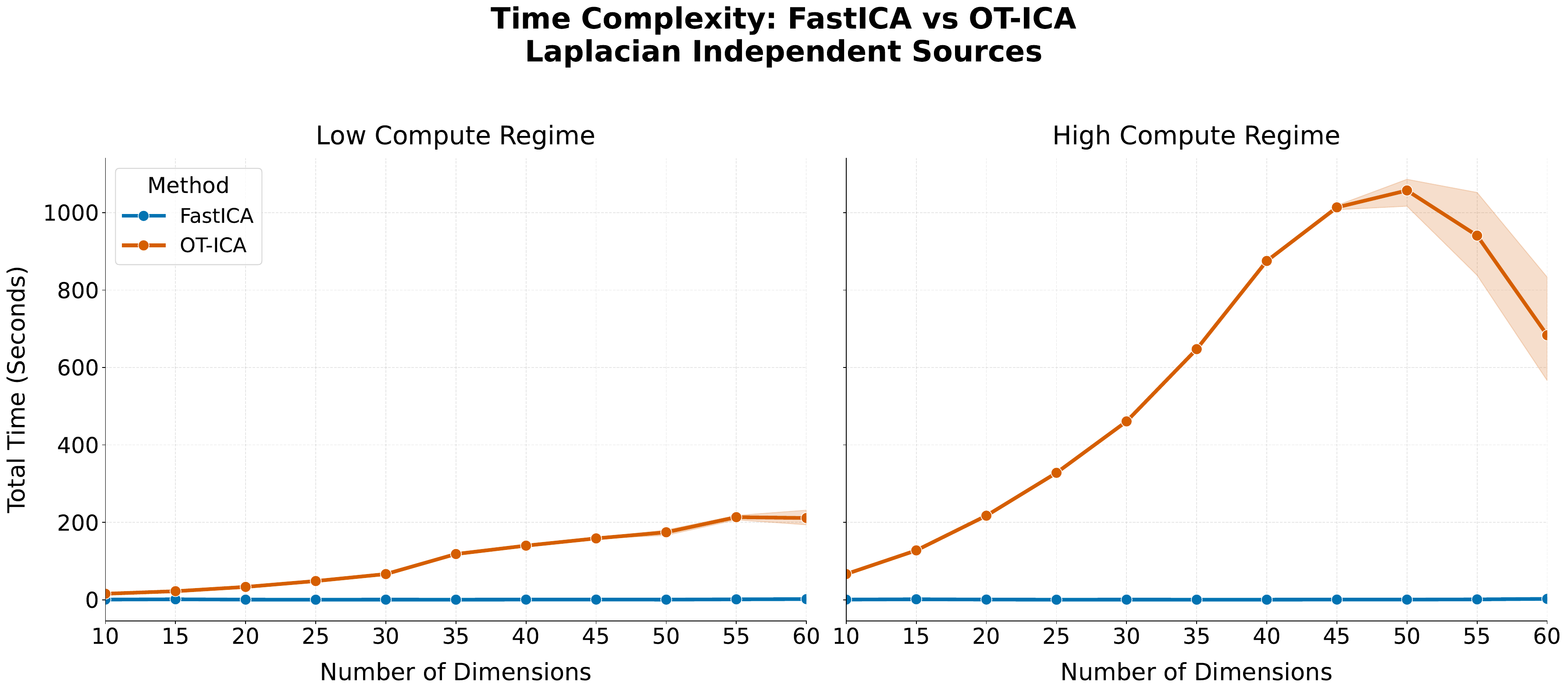}
    \caption{Total execution time (in seconds) for FastICA and OT-ICA across dimensions.}
\end{figure}

The execution time for OT-ICA scales more steeply than FastICA. FastICA costs $\mathcal{O}(dN)$ per iteration, making it computationally superior for exclusively homogeneous, continuous mixtures that do not trigger its proxy blind spots. OT-ICA incorporates parallel sorting across multiple restarts ($\mathcal{O}(K \cdot d N \log N)$), trading execution speed for reliability on complex topologies.

\section{Price Discovery Application: Data Generating Process and IS Definition}
\label{app:price_discovery}

\subsection{Data Generating Process}
We simulate a three-market VECM following the non-Gaussian information-share framework of \citet{zema2025}. The three markets share a single common efficient price, with long-run impact vector $\psi = (1, 1, 1)^\top$ (each market moves one-for-one with the common trend). The true structural mixing matrix is diagonal:
\begin{equation}
    \mathbf{B}_{\text{true}} = \mathrm{diag}\!\left(\sqrt{0.12},\, \sqrt{0.24},\, \sqrt{0.64}\right),
\end{equation}
which, as shown below, yields the target Information Shares $[0.12, 0.24, 0.64]$ by construction.

Structural innovations $\epsilon_t$ are drawn from Student-$t$ distributions with degrees of freedom $\nu_1 = 5$, $\nu_2 = 6$, $\nu_3 = 7$, normalized to unit variance. The VECM reduced-form residuals are $u_t = \mathbf{B}_{\text{true}}\,\epsilon_t$. Heavy tails are a realistic feature of high-frequency financial innovations and are what enables ICA identification: the non-Gaussianity of $u_t$ carries the rotation information that a Gaussian model would lose.

\subsection{Information Share Definition}
Under \citet{hasbrouck1995}, the Information Share of venue $i$ measures its contribution to price discovery:
\begin{equation}
    \mathrm{IS}_i = \frac{(\psi^\top \mathbf{B})_i^2}{\psi^\top \mathbf{B}\mathbf{B}^\top \psi}.
\end{equation}
With $\psi = (1,1,1)^\top$ and diagonal $\mathbf{B}_{\text{true}}$, the numerator equals $B_{ii}^2$ and the denominator equals $\sum_j B_{jj}^2 = 0.12 + 0.24 + 0.64 = 1$, so $\mathrm{IS}_i = B_{ii}^2 = \{0.12, 0.24, 0.64\}$.

\subsection{Estimation Procedure}
The reduced-form covariance is $\boldsymbol{\Omega} = \mathbf{B}\mathbf{B}^\top$. Cholesky or eigendecomposition of $\boldsymbol{\Omega}$ yields the whitening transform $\mathbf{S} = \boldsymbol{\Omega}^{1/2}$; the orthogonal rotation $\mathbf{C}$ satisfying $\mathbf{B} = \mathbf{S}\mathbf{C}$ is then identified from the non-Gaussianity of $\mathbf{S}^{-1}u_t$ by OT-ICA. In this simulation $\boldsymbol{\Omega}$ is treated as known; in practice it is estimated from VECM residuals by OLS. Two economic constraints are imposed post-estimation: the Hungarian algorithm \citep{kuhn1955hungarian} resolves the permutation ambiguity by assigning each structural shock to the market for which it explains maximum variance; sign normalization enforces a positive own-market impact. IS estimates are then computed from $\hat{\mathbf{B}} = \mathbf{S}\hat{\mathbf{C}}$ using the Hasbrouck formula above.

\section{EEG Application Details}
\label{app:eeg}

\subsection{Dataset and Preprocessing}
We use the MNE sample dataset (\texttt{sample\_audvis\_raw.fif}), a publicly available MEG/EEG recording with auditory and visual stimuli.
Five frontal EEG channels (EEG\,001--005) are extracted and cropped to a 10\,s segment (10--20\,s from onset).
A zero-phase FIR bandpass filter (1--40\,Hz, \texttt{firwin} design) is applied before ICA to remove DC drift and high-frequency noise.
Signals are then z-score normalized per channel: $\tilde{x}_i = (x_i - \mu_i)/\sigma_i$.

\subsection{OT-ICA Configuration}
Deflation phase: 50 random restarts, 200 iterations per restart, dither $\sigma = 0.01$.
Symmetric refinement phase: 400 gradient steps, learning rate $\eta = 0.05$, mini-batch size 512, dither $\sigma = 0.01$, symmetric decorrelation retraction.
No distributional assumptions are imposed; the algorithm operates on the 5-channel whitened signal.

\subsection{Artifact Identification and Evaluation}
Independent components are ranked by excess kurtosis $\kappa = \mathbb{E}[(z - \mu)^4]/\sigma^4 - 3$.
The component with the highest $\kappa$ is designated the ocular artifact; blink artifacts are impulsive (super-Gaussian) and are expected to dominate on frontal channels.
Reconstruction quality is evaluated as the RMS amplitude reduction in a $\pm$250\,ms window around the maximum-amplitude blink event, computed as:
\begin{equation}
    \text{RMS reduction} = 1 - \frac{\|\mathbf{x}_{\text{clean}}\|_{\text{RMS}}}{\|\mathbf{x}_{\text{raw}}\|_{\text{RMS}}}.
\end{equation}
The cleaned signal is obtained by zeroing the artifact component in the ICA source space and applying the inverse total unmixing matrix $(\mathbf{W}_{\text{eeg}} \, \mathbf{W}_{\text{white}})^{-1}$.

\end{document}